\newcommand{\mR}{\mathbb{R}}
\renewcommand{\leq}{\leqslant}
\renewcommand{\geq}{\geqslant}
\newtheorem{prop}{Proposition}
\newtheorem{Def}{Definition}
\newtheorem{lemma}{Lemma}
\newtheorem{claim}{Claim}
\newcommand{\mc}{\mathcal}
\newcommand{\mE}{\mathbb{E}}
\newcommand{\om}{\omega}
\newcommand{\tha}{\theta}
\newcommand{\lam}{\lambda}
\renewcommand{\leq}{\leqslant}
\renewcommand{\geq}{\geqslant}
\newtheorem{assume}{Assumption}
\DeclareMathOperator{\eps}{\epsilon}
\DeclareMathOperator{\supp}{supp}
\newcommand{\blue}[1]{\textcolor{blue}{#1}}
\title{Constrained Density Estimation using Optimal Transport}
\author{Yinan Hu and Esteban G. Tabak}
\begin{document}
\maketitle
\begin{abstract}
A novel framework for density estimation under expectation constraints is proposed. The framework minimizes the Wasserstein distance between the estimated density and a prior, subject to the constraints that the expected value of a set of functions adopts or exceeds given values. The framework is generalized to include regularization inequalities to mitigate the artifacts in the target measure. An annealing-like algorithm is developed to address non-smooth constraints, with its effectiveness demonstrated through both synthetic and proof-of-concept real world examples in finance. 
\end{abstract}

\section{Introduction}

 The classical optimal transport (OT) problem seeks the map that moves mass from a source to a target measure while minimizing a prescribed cost function. The objective can be formalized in either Monge's \cite{monge1781OT} or Kantronich's formulation \cite{kantorovich1942translocation}, a convex relaxation of the former that considers transport plans instead of deterministic maps. These foundational formulations have wide-ranging applications, including to economics \cite{galichon2016OT_economics} and machine learning \cite{villani2008optimal}. 

In many practical scenarios, the source measure is known or readily inferrable from empirical data but the target measure is not explicitly specified. Instead, it is only constrained by practical requirements or expert knowledge. For example, when applying Monge's formulation to transportation problems, the placement of the mass in the target region may be constrained to lie entirely beyond a certain boundary or within a particular region, rather than by the specification of a precise location for each fraction of the total mass. Similarly, in economic applications, supply and demand may be subject to constraints such as maximal amounts available or minimal amounts required, rather than dictated through precise marginal distributions. 

    Many of these constraints can be naturally formulated in terms of the expected value of specific functions. For example, in nutritional planning, the focus may be on the total daily protein intake rather than the specific food items consumed. In finance, the price of options placed on an asset must agree with their expected value under the asset's underlying risk-free measure. Also, when the target distribution is constrained through observations, 
   these are often presented as expected values arising from repeated measurements.

 
    These considerations lead to developing a generalized optimal transport model which incorporates expectation-based constraints, enhancing and expanding the applicability of optimal transport in fields such as finance, statistical inference and source allocation under uncertainty.  

    This paper introduces and develops such framework. Starting from a source or prior distribution, we seek the target density that minimizes the Wasserstein distance to the prior while satisfying a set of specified expectation constraints.

\subsection{Related Work}
The most common constraints on an estimated density investigated are linear combinations of moments of the desired estimators. Examples include \cite{jones1991variance_inflation}, focusing on the role of variance inflation on the data upon the restored estimator,  and \cite{hall1999density_estimation_constraint}, which proposes a framework involving smoothed bootstrapping constraints. The framed proposed here addresses general equality constraints and adopts the Wasserstein distance as metric to measure the distance between the prior and estimated distributions.

In finance, the pricing of options has long been a focus of research. A generalized path-integral approach is proposed in \cite{bormetti2006pricing_exotic_option_path_integral} to price common exotic options formulated using the classical Black-Scholes model \cite{black_schole1973pricing}, \cite{de2018machine_learning_pricing_options_hedging} uses Gaussian process regression techniques for the learning process to accelerate the pricing of options and\cite{kim2022pricing_option_flow_generative} applies the flow-based generative network RealNVP, which constructs a normalizing flow of pricing distributions, to price path-dependent exotic options. Closer to our work is \cite{avellaneda1997calibrating_KL_minimization}, which modifies the maximum likelihood estimator by introducing expectation constraints as penalty functions and applies it to the pricing of several exotic options. The method proposed in this paper modifies the objective function used in \cite{avellaneda1997calibrating_KL_minimization}, providing an option pricing method with geometric underlying different from maximal likelihood.

\section{Problem formulation}
\label{sec:formulation}

This section formulates the problem of constrained optimal transport.
For convenience, we summarize our notation in table \ref{tab:notation}.
\begin{table}[]
    \centering
    \begin{tabular}{|c|c|}
    \hline
    Notation & Meaning \\
    \hline
      $\rho$  & Prior measure  \\
       $\mu$  & Estimated target measure \\
       $p_{\rho}$ & Probability density function of $\rho$ \\
       $p_{\mu}$ & Probability density function of $\mu$  \\
       $x_1,x_2,\dots$ & Available samples from $\rho$ \\
       $y_1,y_2,\dots$ & Estimated samples from $\mu$ \\
       $\{f_k\},\ k \in \{1,2,\dots, K\}$ & Functions with constrained expected values \\
       \hline
    \end{tabular}
    \caption{Notation used and their meaning}
    \label{tab:notation}
\end{table}
Given a prior measure $\rho$ in the set $\mc{P}$ of Borel measures on a Polish space $X$, we seek the target measure $\mu\in \mc{P}(X)$ with minimal Wasserstein distance to $\rho$ satisfying the expectation constraints 
\begin{equation}
    \int_{X}{f_k(y)d\mu} = \bar{f}_k,
    \label{eq:expectation_constraint}
\end{equation}
where $f_k\in L^1(\mu),\;k=1,2,\dots,K$ are integrable functions and the $\{\bar{f}_k\}$ are given constants.
When $\rho$ and $\mu$ have corresponding probability density functions, we denote them by $p_\rho$ and $p_\mu$.

The corresponding constrained optimal transport can be formulated in two ways, inspired by Monge and Kantorovich's formulations of the classical problem respectively.

\begin{Def}[Constrained optimal transport problem, Monge's formulation] Given a cost function $c:X\times X \rightarrow \mR$, a prior measure $\rho$ and a set of functions $\{f_k\}$ and real values $\{\bar{f}_k\}$, we define the constrained optimal transport problem as:
\begin{equation}
\begin{aligned}   \underset{\substack{T:X\rightarrow X \\ \mu\in \mc{P}(X)}}{\min}\;&  J_c[T] =\int_{X}{c(x,T(x))\ d\rho(x)}, \\
    \text{s.t.}&\; T\#\rho = \mu, \\
    &\int_{X}{f_k(y)\ d\mu}= \bar{f}_k, \ k=1,2,\dots,K.
\end{aligned}
\label{opt:constrained_ot_monge}
   \end{equation}    
\end{Def}

\begin{Def}[Transportation cost, Kantorovich's formulation]
   Given a cost function $c(x,y)$, a prior distribution $\rho$ and a target distribution $\mu$, the Kantorovich transportion cost between them is defined as follows:
   \begin{equation}
       {I}_c(\rho,\mu) = \int_{X\times Y}{c(x,y)\ d\pi(x,y)},
   \end{equation}
    where 
    \begin{equation}
        \pi \in \Pi(\rho,\mu) = \{\pi\in \mc{P}(X\times Y),\;\pi_x = \mu,\;\pi_y = \rho\}.
    \end{equation}
\end{Def}

\begin{Def}[Constrained optimal transport, Kantorovich's formulation]

The Kantorovich formulation of constrained optimal transport is defined as follows:
\begin{equation}
    \begin{aligned}
    \underset{\mu\in \mc{P}(X)}{\min}&\;I_c(\rho,\mu)\\
    \text{s.t.}\;& \int_{X}{f_k(y)\ d\pi_y(y)} = \bar{f}_k,\;k=1,2,\dots,K.
    \end{aligned}
\end{equation}
\end{Def}

Notice that Monge and Kantorovich formulations of constrained optimal transport problems differ from their classical optimal transport counterparts, in that the target measure itself is unknown, not only the transportation map or plan.

\section{Some simple examples}
In order to illustrate the problem and the nature of its solutions, we consider some simple examples for which we can write exact or semi-exact solutions, using as cost the canonical squared distance: $c(x,y) = \|x-y\|_2^2$.
We start with one-dimensional examples, where $X=\mR$.
In regular one-dimensional optimal transport problems with quadratic cost, the optimal map is always monotone (see for instance chapter 2 of \cite{santambrogio2015optimal}.) The following lemma shows that such monotonicity carries over to our formulation.

 \begin{lemma}[The optimal map is monotone]
 \label{lemma:monotone_map}
      If the optimal transport map $T^*:X\rightarrow X$ for the optimization problem \eqref{opt:constrained_ot_monge} exists, it increases monotonically. 
 \end{lemma}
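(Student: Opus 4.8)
The plan is to argue by contradiction, exploiting the fact that the expectation constraints in \eqref{opt:constrained_ot_monge} depend on the candidate map $T$ only through its pushforward $\mu = T\#\rho$. Consequently, any rearrangement of $T$ that leaves the pushforward measure unchanged automatically preserves admissibility, so optimality forces $T^*$ to be the cheapest among all maps sharing its own pushforward $\mu^* = T^*\#\rho$. This reduces the problem, once $\mu^*$ is fixed, to an ordinary Monge problem from $\rho$ to $\mu^*$ with quadratic cost, for which monotonicity is precisely the classical result cited from chapter 2 of \cite{santambrogio2015optimal}.

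To make this self-contained, I would run the standard swapping argument directly. Suppose $T^*$ fails to be monotone; then there are points $x_1 < x_2$ with $y_1 := T^*(x_1) > T^*(x_2) =: y_2$. Define $\tilde{T}$ by exchanging the two images, $\tilde{T}(x_1) = y_2$ and $\tilde{T}(x_2) = y_1$, leaving $T^*$ unchanged elsewhere. Because the collection of image values is unaltered, $\tilde{T}\#\rho = T^*\#\rho = \mu^*$, so the constraints $\int_X f_k\,d\mu^* = \bar{f}_k$ still hold and $\tilde{T}$ remains admissible. The change in the quadratic cost contributed by the two points is
\begin{equation}
\big[(x_1-y_1)^2+(x_2-y_2)^2\big]-\big[(x_1-y_2)^2+(x_2-y_1)^2\big]=2(y_1-y_2)(x_2-x_1)>0,
\end{equation}
since $x_2>x_1$ and $y_1>y_2$. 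Thus the swap strictly lowers $J_c$ while respecting all constraints, contradicting the optimality of $T^*$.

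The main obstacle is that this pointwise swap is only heuristic in the continuous setting, where modifying $T^*$ at finitely many points changes neither the cost nor the pushforward. The argument must therefore be promoted to sets of positive $\rho$-measure: if monotonicity fails on a positive-measure set, one selects disjoint Borel sets $A_1, A_2$ with $A_1$ lying to the left of $A_2$ yet $T^*(A_1)$ lying, at least in part, to the right of $T^*(A_2)$, and then exchanges mass between them via a measure-preserving rearrangement so that $\mu^*$ is exactly preserved while the integrated cost drops by a positive amount. Carrying out this rearrangement rigorously — choosing the sets, defining the measure-preserving swap, and quantifying the strict cost gain — is the delicate part; it is exactly where the reduction to the classical one-dimensional quadratic problem is convenient, since that theory (through cyclical monotonicity of the optimal plan's support) already packages this measure-theoretic work and yields monotonicity of the optimal map directly.
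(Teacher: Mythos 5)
Your proposal is correct and rests on exactly the same reduction as the paper's proof: since the constraints depend on $T$ only through its pushforward $\mu = T\#\rho$, the optimal map must solve the classical Monge problem from $\rho$ to $\mu^* = T^*\#\rho$, and monotonicity then follows from the standard one-dimensional quadratic-cost theory. Your additional swapping argument is a reasonable heuristic illustration, and you correctly identify that its rigorous version is precisely what the cited classical result (via cyclical monotonicity) already provides, so nothing is missing.
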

 \begin{proof}
 The optimal solution to the problem consists of the target measure $\mu$ and the map $T$ pushing the prior $\rho$ to $\mu$. The map $T$ necessarily solves the classical optimal transport problem between $\rho$ and $\mu$, since the constraints involve only $\mu$, so any other map yielding the same $\mu$ will automatically satisfy them. Then $T$ inherits all the properties of regular OT, including its monotonicity.
 \end{proof}

We compare the problem's solution for each choice of $\{f_k\}$ to the solution from an alternative formulation of density estimation with constraints, based not on optimal transport but on the Kullback–Leibler divergence between distributions. This was the choice adopted in \cite{avellaneda1997calibrating_KL_minimization} as a metric for the discrepancy between prior and pricing distributions.
They estimate the target density by solving the following optimization problem:

\begin{equation}
\begin{aligned}     \underset{\mu \in \mc{P}(X)}{\min}\;&\text{KL}(\rho\|\mu), \\
    \text{s.t.} &\int_{X}{f_k(y)\ d\mu} = \bar{f}_k, \;k=1,2,\dots,K.
\label{opt:KL_multiple_constraint}
\end{aligned}
\end{equation}
The Kullback–Leibler divergence between distributions is a ``vertical'' measure of dissimilarity, comparing the value of both distributions at each point, while the optimal transport cost provides a ``horizontal'' measure, which quantifies the displacement in $X$-space required to make both distributions identical.

\subsection{Indicator function}
We first consider for $f$ the indicator function for the complement of a closed interval:
\begin{equation}
    f(s) =  \mathbf{1}_{\mR \backslash [a,b]}(s) = \begin{cases}
        1 & s\in \mR \backslash [a,b],\\
        0 & \mbox{o/w},
    \end{cases} \quad \hbox{with constraint } \ \bar{f}=0,
    \label{func:support}
\end{equation}
which requires the support of the target distribution to lie within the interval $[a,b]$.

The solutions of the corresponding OT and KL-based problems are quite different. For KL, we have
\begin{equation}
    p_\mu^{KL}(y) = \begin{cases}
        \frac{p_\rho(y)}{\int_{a}^b{p_\rho(y)dy}} & y\in [a,b], \\
        0 & \mbox{o/w},
    \end{cases}
\end{equation}
where every point within the support of the target is stretched vertically by the same amount, while for OT, 

    \begin{equation}
        \mu^{OT} = \rho|_{[a,b]} + c_a\delta_a + c_b\delta_b,
        \label{opt:sol}
    \end{equation}
    where $\rho|_{[a,b]}$ refers to the restriction of the measure $\rho$ within the interval $[a,b]$, that is, for all $A\subset \mR$,
    \begin{equation}
        \rho|_{[a,b]}(A) = \rho(A\cap [a,b]),
    \end{equation}
    and 
    \begin{equation}
        c_a = \int_{-\infty}^a{p_\rho(z)dz},\;\;c_b = \int_{b}^{\infty}{p_\rho(z)dz},
    \end{equation}
a result proved in appendix \ref{appd:prop_indicator_function}.

\subsection{The RELU function} Another example has as constraint the `RELU' function:
  \begin{equation}
    f(x) = \begin{cases}
    x - \om & x\geq \om, \\
    0 & \mbox{otherwise},
    \end{cases}
    \label{func:relu_1}
\end{equation} 
used for the pricing of options with strike price $\om$.

Here the solution of the KL-based problem is
\begin{equation}
    p^{KL}_\mu(y) = 
    \begin{cases}
        \frac{1}{Z}p_\rho(y)e^{\gamma(y-\om)} & y>\om \\
         \frac{1}{Z}p_\rho(y)& \mbox{o/w}
    \end{cases},
    \label{sol:mu_kl_divergence_relu}
\end{equation}
where
\begin{equation}
    Z = \int_{\om}^{\infty}{p_\rho(y)e^{\gamma(y-\om)}dy} + \int_{-\infty}^{\om}{p_\rho(y)dy}
    \label{eq:normalization}
\end{equation}
and $\gamma$ is the unique constant satisfying
\begin{equation}
    \frac{1}{Z}\int_{\om}^{\infty}{p_\rho(y)e^{\gamma(y-\om)}(y-\om)dy} = \bar{f}.
    \label{sol:lambda}
\end{equation}
By contrast, the solution to the corresponding OT problem involves a horizontal, piecewise constant shifting of elements of the prior measure: for all $A\subseteq \mR$, 
$$
         \mu^{OT}(A) = \rho(T^{-1}_*(A)),
$$
  where $T_*:X\rightarrow Y,\;T_*(x) = x + \lambda \mathbf{1}_{[ x_*,\infty)}(x)$.
The parameters $\lam,x_*$ are determined by the choice among the following two candidate systems that yields the lower value of the objective function.

\smallskip
     1) $$
    \begin{cases}
      x_* =\om - \frac{\lam}{2}, \\
      \int_{x_*}^{\infty}{(x-\om+\lambda)p_\rho(x)dx} = \bar{f},
    \end{cases}
$$

2)

$$
 \begin{cases}
x_*=\om-\lambda, \\
      \int_{\om-\lam}^{\infty}{(x-\om+\lambda)p_{\rho}(x)dx} = \bar{f};
 \end{cases}
$$
as proved in Appendix \ref{appd: relu_1}.

\begin{figure}
 \centering
\begin{tabular}{cc}
\includegraphics[width=0.5\linewidth]{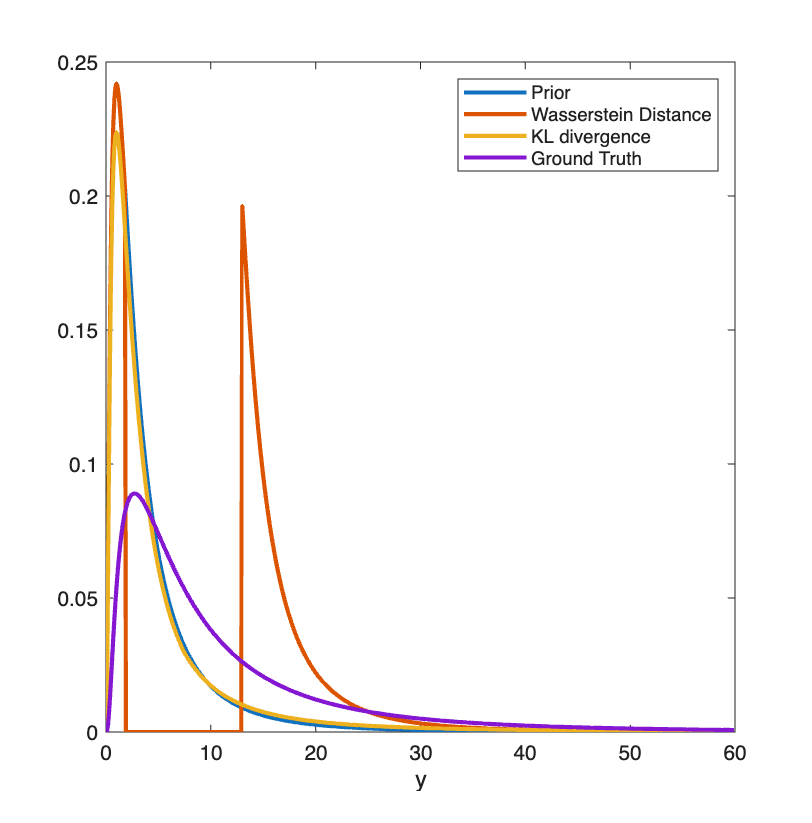}     &  \includegraphics[width=0.5\linewidth]{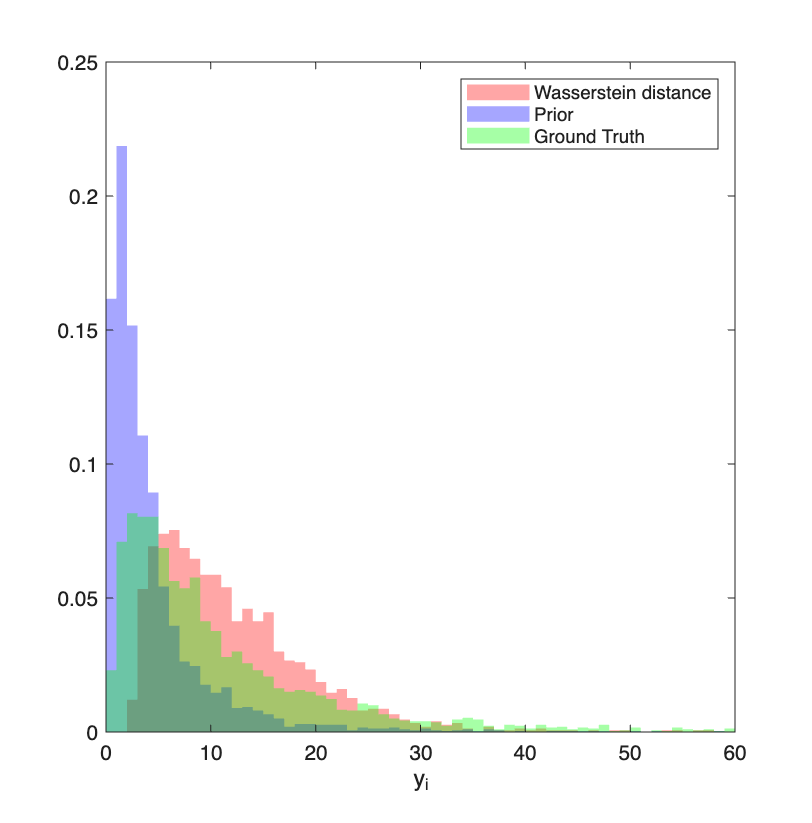} 
\end{tabular}
\caption{Left: the prior, the {surrogate} distribution, the target distribution restored from KL divergence and the exact target distribution restored from Wasserstein distance without smoothing; Right: the sample-based prior and the target distributions restored from Wasserstein distance with smoothing inequalities. The prior is $\text{Lognormal}(1,1)$. The \blue{surrogate} measure is $\text{Lognormal}(2,2)$. Single RELU constraint \eqref{func:relu} is adopted with $\om=7.3891$ and $\bar{f}_k$ computed via \eqref{eq:guassian_constraint_result}.  }
\label{fig:relu_1}
\end{figure}

\subsection{Multiple RELU functions}
Switching to cases of multiple expectation constraints, notice that these constraints may not be consistent, the simplest example being two constraints setting the expectation of the same function to two different values. Thus it is necessary to assume that there exists at least one distribution satisfying all of the constraints.

\begin{assume}[Assumption of consistency] 
 For problem \eqref{opt:constrained_ot_monge}, we assume that there exists a probability measure $\mu$ satisfying all the equality constraints:
$$
        \int_{Y}{f_{k}(y)d\mu} = \bar{f}_k,\;k=1,2,\dots,K.
$$
\end{assume}
\noindent
We guarantee that this condition holds in our synthetic examples, by picking a surrogate distribution $\mu$ for the target and adopting for $\{\bar{f}_k\}$ the expected values of the $\{f_k\}$ on it.

For the RELU functions, we have
\begin{equation}
    f_k(x) = \begin{cases}
    x - \om_k & x\geq \om_k, \\
    0 & \mbox{otherwise},
    \end{cases}
    \label{func:relu}
\end{equation}
where we assume for convenience that $\om_1<\om_2<\dots,<\om_K$. In finance, multiple RELU functions appear as constraints through the pricing of up-and-out options of an asset at different thresholding prices.

Similar to the single RELU case, the solution of the KL-based problem is
\begin{equation}
    p^{KL}_\mu(y) = 
    \begin{cases}
    \frac{1}{Z_K}p_\rho(y)& y<\om_1 ,\\
        \frac{1}{Z_K}p_\rho(y)e^{\sum_{i=1}^{k}{\lambda_i(y-\om_i)}} & y\in [\om_{k},\om_{k+1}], \\
         \frac{1}{Z_K}p_\rho(y)e^{\sum_{i=1}^{K}{\lambda_i(y-\om_i)}} & y>\om_K ,
    \end{cases}
\label{sol:mu_kl_divergence_relu}
\end{equation}
where $Z_K$ refers to the normalization coefficient:
\begin{equation}
    Z_K = \int_{\om_K}^{\infty}{p_\rho(y)e^{\sum_{i=1}^K{\lambda_i(y-\om_i)}}dy} + \sum_{k=1}^{K-1}{\int_{\om_{k}}^{\om_{k+1}}{p_\rho(y)e^{\sum_{i=1}^{k}{\lambda_i(y-\om_i)}}dy}} + \int_{-\infty}^{\om_1}{p_\rho(y)dy} 
    \label{eq:Z_K}
\end{equation}
and $\lambda_k,\;k=1,2,\dots, K$ meet the equality constraints:
\begin{equation}
    \frac{1}{Z_K}\int_{\om_k}^{\infty}{p_\rho(y)e^{\sum_{i=1}^k{\lambda_i(y-\om_i)}}(y-\om_k)dy} = \bar{f}_k,\;k=1,2,\dots,K.
    \label{sol:lambda_k}
\end{equation}
The solution for the OT-based problem is more complex; it can be best presented in a backward recursive way:
\begin{equation}
         \mu^{OT}(A) = \rho(T^{-1}_*(A)),
\end{equation}
  where $T_*(x) = x + \lambda_{K*} \mathbf{1}_{[ x_{K*},\infty)}(x) + \sum_{k=1}^{K-1}\tau_{k*}(x)\mathbf{1}_{[x_{k*},x_{k+1*}]}(x)$. Here $\lam_{K*}, x_{K*}$ are solutions of one of the following equations:

1-1) 
 \begin{equation}
 \begin{cases}
x_{K*}=\om_K-\lam_{K*}, \\
      \int_{\om_K-\lam_{K*}}^{\infty}{(z-\om_K+\lam_{K*})p_{\rho}(z)dz} = \bar{f}_K;
 \end{cases}
\end{equation}

1-2)
\begin{equation}
    \begin{cases}
      x_{K*} =\om_K - \frac{\lam_{K*}}{2}, \\
      \int_{x_{K*}}^{\infty}{(z-\om_K+\lam_{K*})p_\rho(z)dz} = \bar{f}_K.
    \end{cases}
\end{equation}
And for $k=K-1,K-2,\dots,1$,
\begin{equation}
    \tau^*_k(x) = 
    \begin{cases}
        \om_{k+1} - x,  & x\in [\om_{k+1} - \lam_{k*},x_{k+1*}],\\
        \lambda_{k*}, &  x \in [\om_k-\lam_{k*} ,\om_{k+1}-\lam_{k*}],\\
        \om_k-x & x\in [x_{k*},\om_k-\lam_{k*}],
    \end{cases}  \label{eq:tau_final_solution_case1}
\end{equation}
Defining
\begin{equation}
\begin{aligned}
\Delta \om_k & = \om_{k+1}-\om_k;\\
    H^+_k(x,\lam) &= \int_{x}^{x_{k+1*}}{(z-\om_k+\lam)p_\rho(z)dz};\\
    H^{-}_k(x,\lam) & =\int_{x}^{\om_{k+1}-\lam}{(z-\om_k+\lam)p_\rho(z)dz} + \Delta \om_k\int_{\om_{k+1}-\lam}^{x_{k+1*}}{p_\rho(z)dz};\\
    \Delta \tilde{f}_k(\lam)& = \bar{f}_k- \bar{f}_{k+1} + \Delta \om_k\int_{x_{k+1*}-\lam}^{\infty}{p_\rho(z)dz},
\end{aligned}
\end{equation}
the parameters $\lambda_{k*}$ and $x_{k*}$ satisfy one set of the following system equations: 
\begin{equation}
\begin{aligned}
 \text{2-1)} \qquad &x_{k*} = \om_k -\frac{1}{2}\lam_{k*},\;H^+_k(x_k,\lam_k) = \Delta\tilde{f}_k; \\
 \text{2-2)} \qquad    &x_{k*} = \om_k -\frac{1}{2}\lam_{k*},\;\;H^{-}_k(x_k,\lam_k)= \Delta\tilde{f}_k;\\
 \text{2-3)} \qquad  & x_{k*} = \om_k -\lam_{k*},\;\;H^+_k(x_k,\lam_k) = \Delta\tilde{f}_k; \\
 \text{2-4)} \qquad    &x_{k*} = \om_k -\lam_{k*},\;\;H^{-}_k(x_k,\lam_k)= \Delta\tilde{f}_k; \\
 \text{2-5)} \qquad& \Delta\om_k\int_{x_{k*}}^{\infty}{p_\rho(z)dz} = \Delta f_k,\; \lam_k\leq \om_{k+1}-x_{k+1},
\end{aligned}
\end{equation}
as proved in appendix \eqref{appd:relu_K}.

\begin{figure}[t!]
    \centerline{\subfloat{\includegraphics[width=0.5\linewidth] {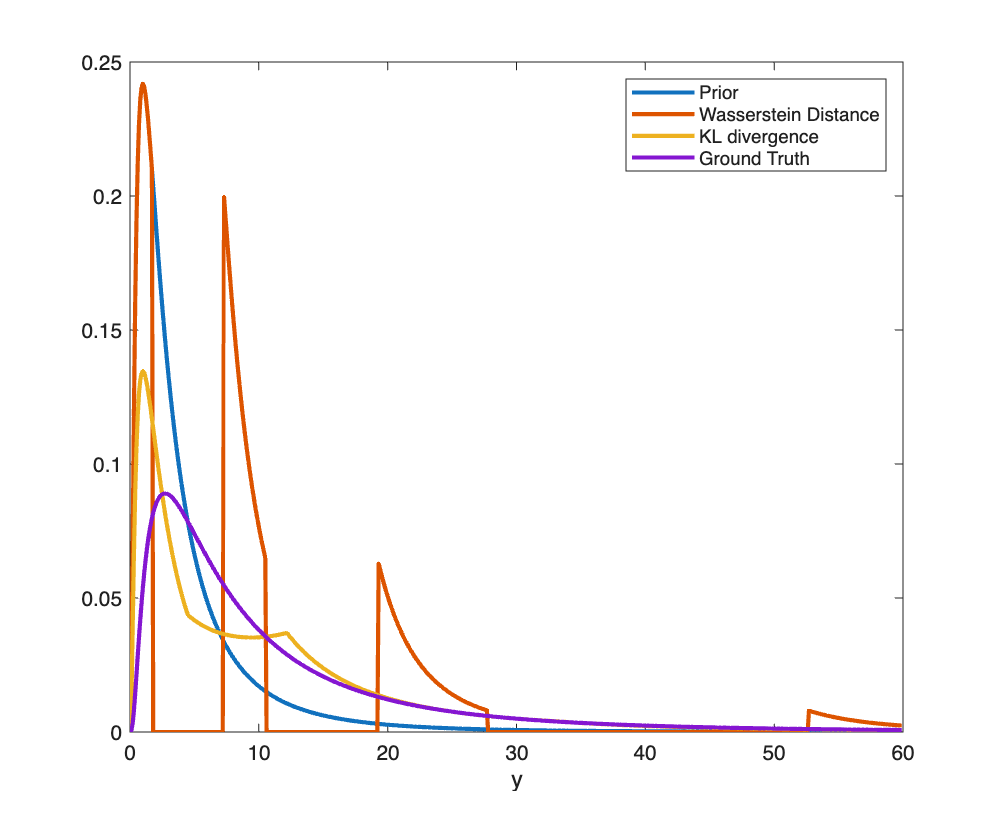}}
   \subfloat{\includegraphics[width=0.5\linewidth] {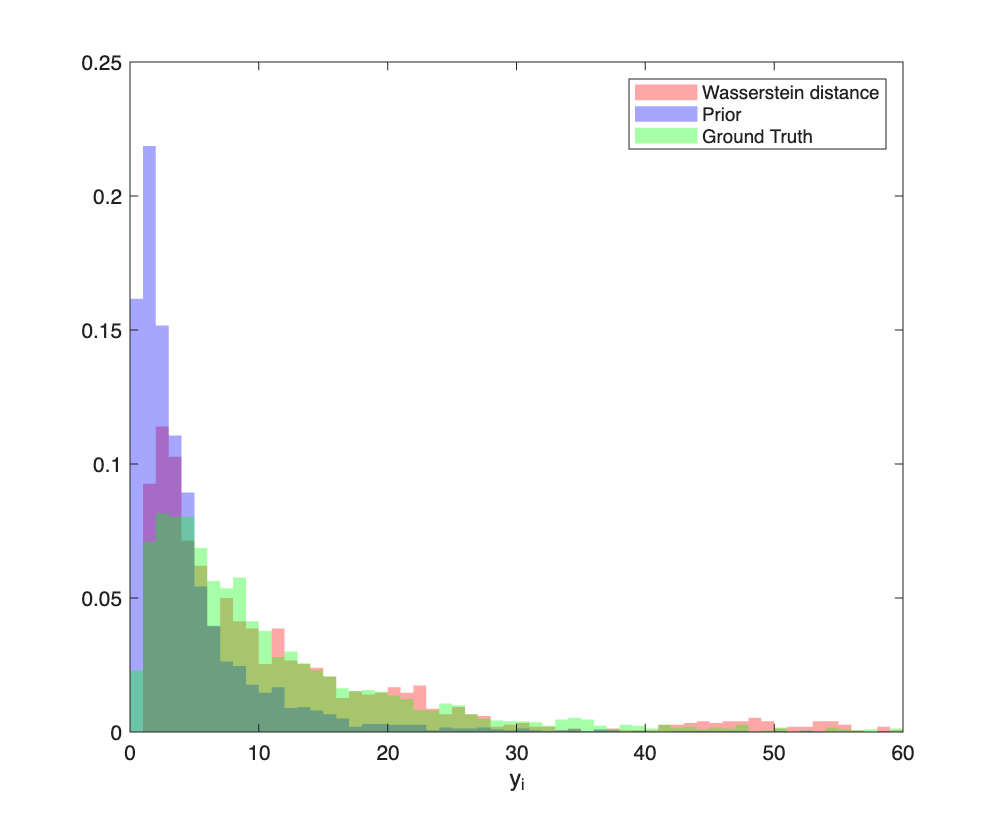}}
    }
    \caption{Left: Target distributions resulting from minimizing KL divergence as well as Wasserstein distance. The prior is $\text{Lognormal}(1,1)$. The {surrogate} measure is $\text{Lognormal}(2,2)$. We select $K=3$ with $\bar{f}_k$ $(k=1,2,\dots, K)$ computed in \eqref{eq:guassian_constraint_result}. 
   Right: the finite-sampled counterpart with inequality constraints. We choose $N=2000$ samples from the prior distribution and compute the target samples.}
    \label{fig:relu_k}
\end{figure}

In order to illustrate the different results for the KL divergence and the OT methods, we adopt as prior $p_\rho = \text{Lognormal}(0,1)$ and as surrogate target $p^{K}_\mu = \text{Lognormal}(1,1)$, under which we compute the prices of four options,  $ K=5,\;\om_k= -3+k,\;k=1,2,\dots,K$ and 
 \begin{equation}
     \bar{f}_k = \int_{\om_k}^{\infty}{(y-\om_k)p^{K}_\mu(y)dy}.
    \label{eq:guassian_constraint_result}
 \end{equation}
The resulting target distributions are displayed in the left of Figure \ref{fig:relu_k}. 
 
We observe that the KL divergence approach tends to ``stretch" the prior distribution vertically, resulting in sharp corners as artifacts, while the constrained optimal transport framework shifts samples horizontally, leading to accumulations and gaps as artifacts. In the next section, we will explore strategies to mitigate these artifacts through additional constraints.

\section{Smoothing inequality constraints}

The examples above make it clear that, when the functions $\{f_k\}$ are non-smooth, the optimal target distribution $\mu$ is often singular in two typical ways: assigning finite measure to small sets or having gaps, i.e. areas where $\mu$ vanishes surrounded by others where it does not. These are not artifacts: for instance, since the map with smallest cost transporting a prior distribution to a subset of $\mR^n$ needs not move any mass beyond the subset's boundary, it will accumulate finite mass along it, and if only a fraction of the original mass is required to move to said subset, then the most distant part of the prior will stay in place and a gap will form between it and the transported fraction. Yet these features may be undesirable: we may prefer the mass to be somewhat spread across the target subset, and regions with zero probability may be unrealistic models for the problem under consideration.

In order to mitigate such effects, we introduce the following two inequalities on the target probability density $p_\mu$:
\begin{align}
    \int_{X}{p^2_\mu(y)dy}\leq {M}_{\delta} ,\; \int_{D}{\frac{1}{p_\mu(y)} dy}\leq {M}_0 ,
        \label{ineq:bound_f_mu}
\end{align}
where the first inequality forbids measure accumulation in small sets and the second restricts areas of small density within a specified domain $D$.

\section{A numerical algorithm}
\label{sec:algorithm_design_finite_sample}

The previous sections established a theoretical framework for optimal transport with expectation constraints and provided several examples where explicit solutions under special choices of $f_k,\bar{f}_k$ and $p_\rho$ can be found.  However, explicit analytical solutions for the general problem in \eqref{opt:constrained_ot_monge} are typically not available, particularly in the presence of the smoothing constraints in (\ref{ineq:bound_f_mu}), which are nonlinear in $p_{\mu}$. Moreover, the prior $\rho$ is often not provided in closed form but through a set of independent samples $\{x_i\}$.

In this section, we develop a numerical algorithm that uses the $n$ samples $\{x_i\} \sim \rho$ as data and provides as output a corresponding set $\{y_i = T^*(x_i)\}$ of samples of the estimated distribution $\mu$. 
We replace the  prior measure $\rho$ by its empirical counterpart $\hat{\rho}$:
\begin{equation}
    \hat{\rho} = \frac{1}{n}\sum_{i=1}^n{\delta_{x_i}},
\end{equation}
and seek target samples $\{y_i\}_{i=1}^n$, such that the empirical target measure
\begin{equation}
    \hat{\mu}^* = \frac{1}{n}\sum_{i=1}^n{\delta_{y^*_i}},
\end{equation}
satisfies the constraints and the empirical transport cost is minimized:
\begin{equation}
 \underset{\{y_i\}}{\min}\;   L = \frac{1}{n}\sum_{i=1}^n{\|x_i-y_i\|^2} + \sum_{k=1}^{K}{\lam_k\Big(\frac{1}{n}\sum_{i=1}^n{f_k(y_i)-\bar{f}_k \Big)^2}},
    \label{eq: objective_hard_constraint}
\end{equation}
where we have replaced the constraints by a quadratic penalty in the objective function with penalization parameters $\{\lambda_k\}$. This minimization problem can be solved easily through gradient descent if the functions $\{f_k\}$ are smooth and their support is broad enough (More on this below.) When they are not, as in most of our examples above, we mollify them into $\{f^{\eps}_k\}$, smooth functions with larger support that converge to the $\{f_k\}$ as $\eps\rightarrow 0$.

It is not solely to have well-defined derivatives of $L$ that we mollify the $\{f_k\}$: the vanishing gradients of functions $f_k$ within flat regions can stall the optimization. Consider as a simple example the Heaviside step function
$$ f(x) = \begin{cases} 0 & \hbox{for $x < 0$} \cr 1 & \hbox{for $x \geq 0$},  \end{cases} $$
which we can use to require a given fraction $\bar{f}$ of the mass of $\mu$ to lie to the right of $x=0$. The problem of using functions like this within (\ref{eq: objective_hard_constraint}) is not that $f'(0)$ is not well defined, but rather that $f'(x) = 0$ for all $x \ne 0$. Because of this, nothing in $\frac{\partial L}{\partial y_i}$ pushes any point $y_i$ to the other side of $y=0$, so all samples will remain at their original position $y_i = x_i$, which minimizes the transportation cost but does not satisfy the constraint. By contrast, the mollified
$$ f^{\eps}(x) = \frac{1}{2}\left[\tanh\left(\frac{x}{\eps}\right)+1\right] $$
does not have this problem if $\eps$ is large enough for $f^{\eps}$ to be sensitive to the values of all $\{y_i\}$. 

Motivated by this, we employ an annealing-like schedule. We start with a relatively large $\eps$ to ensure far from zero gradients across sample space. After the algorithm converges for a given $\eps$, we reduce it (e.g., 
$\eps\leftarrow \eps/\beta$ for some 
$\beta>1$) and repeat the optimization. Gradually reducing the mollification allows the solution to satisfy the constraints on the true $\{f_k\}$ without leaving sample-points behind.

\paragraph{Smoothing inequality constraints} 
In order to enforce the inequality constraints in (\ref{ineq:bound_f_mu} ), we need to estimate the density function $p_{\mu}$ from the samples $\{y_i\}$ and evaluate the integrals via Monte Carlo. We choose to do this through kernels, writing
$$ p_{\mu}(y) \leftarrow \frac{1}{n} \sum_{j=1}^n K\left(y, y_j\right), $$
$$ \int p_{\mu}^2(y) \ dy = \int p_{\mu}(y) \ d\mu \leftarrow \frac{1}{n} \sum_{j=1}^n p_{\mu}(y_j) , $$
$$
    \int_{D}{\frac{1}{p_\mu(y)} dy} =  \int_{Y}{\frac{\mathbf{1}_D}{p^2_\mu(y)} d\mu}  \leftarrow \frac{1}{n} \sum_{j=1}^n \frac{\varphi_{\eps}(y_j;D)}{\hat{p}^2_{\mu}(y_j))},
$$
where $\varphi_{\eps}(\cdot\ ;D)$ is  a mollifier of the indicator function of $D$.
For our examples, we have used Gaussian kernels $K$ with bandwidth determined through the rule of thumb applied to the original $\{x_i\}$.

Then, using barrier functions for the inequality constraints, the full objective function becomes
\begin{equation}
 L^{\eps,t} = \frac{1}{n}\sum_{i=1}^n{\|x_i-y_i\|^2} + \sum_{k=1}^{K}{\lam_k\Big(\frac{1}{n}\sum_{i=1}^n{f^{\eps}_k(y_i)-\bar{f}_k \Big)^2}} + \frac{\lambda_{\delta}}{t} B_{\delta} + \frac{\lambda_0}{t} B^{\eps}_0,
    \label{eq: objective_full}
\end{equation}
where
$$ B_{\delta} = -
    \log\left(-\frac{1}{n^2 h}\sum_{i,j}{K(\frac{y_i-y_j}{h})}+ {M}_{\delta}\right),  $$
$$ B^{\eps}_0 = -\log\left(-n h^2 \sum_{i}\frac{\varphi_{\eps}(y_j;D)}{(\sum_{j}{K(\frac{y_i-y_j}{h})})^2}+ {M}_0\right).$$

\paragraph{Adaptive learning rate} 
We use a backtrack line search to determine the step size $\eta^{(t)}$ at each iteration, ensuring sufficient decrease and robust convegence. For each update step, we begin with an initial guess of the learning rate slightly expanded from the size of the previous step: $\eta^{(t)}\leftarrow (1+\delta)\eta^{(t-1)}$ and reduce it until the Armijo-Goldstein condition is satisfied: 
\begin{equation}
    L^{\eps}(y^{(t)}-\eta^{(t)}\nabla L^{\eps}(y^{(t)}) )\leq  L^{\eps}(y^{(t)}) +\alpha\eta^{(t)} \|\nabla  L^{\eps}(y^{(t)})\|^2,
    \label{alg:step_size_test}
\end{equation}
where $\alpha\in (0,0.5]$ is a control parameter. This ensures that each step provides a meaningful reduction in the objective function. 

We detail the algorithm for soft-constrained problems in Algorithm \ref{alg: sol_gradient_descent_unconstrained}.
\begin{algorithm}
    \begin{algorithmic}[1]
       \State \textbf{Input:} Parameters $\{\lambda_k\}^{K}_1,n,\{f^{\eps}_{k}\}^{K}_1$. Horizon $T_{\max}\in \mathbb{Z}$, tol.
       \State Collect samples $x = (x_1,\dots,x_n)\in\mR^{n}$ from the prior $\rho$.
       \State {\textbf{Initialize:}} $y^{(0)}$. {\textbf{Set:}} $\eps\leftarrow \eps_0$.
       \For{$j = 1$ \textbf{to} $J$} \Comment{Outer loop for annealing $\epsilon$}
\For{$t = 1$ \textbf{to} $T_{max}$} \Comment{Inner loop for gradient descent}
\State Compute gradient: $g^{(t)} \leftarrow \nabla_y L^{\epsilon}(y^{(t)}; x)$.
\State Determine step size $\eta^{(t)}$ using backtracking line search satisfying the Armijo condition.
\State Update samples: $y^{(t+1)} \leftarrow y^{(t)} - \eta^{(t)} g^{(t)}$.
\If{$\|y^{(t+1)} - y^{(t)}\|<\text{tol}$}
\State \textbf{break} \Comment{Converged for the current $\epsilon$}
\EndIf
\EndFor
\State Update starting point for next round: $y^{(0)} \leftarrow y^{(t+1)}$.
\State Anneal smoothing parameter: $\epsilon \leftarrow \epsilon / \beta$.
\EndFor
\State \textbf{Return:} Final target samples $y = (y_1, \dots, y_n)$.
    \end{algorithmic}
    \caption{Solution to the soft-constrained optimization}
     \label{alg: sol_gradient_descent_unconstrained}
\end{algorithm}

\subsection{One-dimensional examples}
The restored target measures in a sample-based setting using the Algorithm \ref{alg: sol_gradient_descent_unconstrained} are depicted on the right in Figures \ref{fig:relu_1} and \ref{fig:relu_k}. We observe that the smoothing inequalities help mitigate gaps and accumulations by re-distributing the measures located at accumulations and spreading them out to fill in the gaps.

\subsection{Two-dimensional examples}
We apply next the constrained optimal transport framework with smoothing inequalities on 2D examples, i.e. with $X=\mR^2$. For demonstration, we select for the equality constraints a single ($K=1$) two-dimensional indicator function supported on the complement of a set $D$: 
\begin{equation}
    f(z_1,z_2) = \mathbf{1}_{\mR^2\backslash D} =\begin{cases}
        1 & (z_1,z_2)\notin D \\
        0 & (z_1,z_2)\in D 
    \end{cases},\;\;\bar{f} = 0.
\end{equation}

\paragraph{Indicator function within a circle}
As a first example, $D = D_R = \{(z_1,z_2)\in \mR^2:\;z^2_1 +z^2_2\leq R^2\}$ specifies a circle within which the support of the target measure is required to lie. 
The OT solution without smoothing inequalities is 
\begin{equation}
    \mu^{OT} = \rho|_{D_R}+ \alpha_R  \delta_{\partial D_R}, \quad \alpha_R = \iint\limits_{z^2_1+z^2_2\geq R^2} f_\rho(z_1,z_2) \, dz_1 \, dz_2 .
    \label{eq:exact_measure_circle}
\end{equation}
%
Figure \ref{fig:result_2d_indicator} depicts the exact target measure \eqref{eq:exact_measure_circle} and the restored target measures with and without smoothing constraints.

\paragraph{Indicator function on a half plane:} As another example, we consider the half plane represented by the set $D = D_3 = \{(z_1,z_2): z_1\geq R\}$. 
The OT solution without smoothing inequalities is 
\begin{equation}
    \mu^{OT} = \rho|_{D_3} +\alpha_3 \delta_{\partial D_3}, \quad \alpha_3 = \iint\limits_{z_1\leq R} f_\rho(z_1,z_2) \, dz_1 \, dz_2 .
\end{equation}
%
Figure \ref{fig:2d_circle} depicts the exact target measures and the estimated target measure with and without smoothing constraints. We observe that the unregularized estimated target measure is close to the exact measure both in terms of its continuous and its atom components, while the regularized target measure mollifies some of the target measures near the boundaries and pushes it further away.

\begin{figure}
    \centering
    \begin{tabular}{ccc}
 \includegraphics[width=0.3\textwidth]{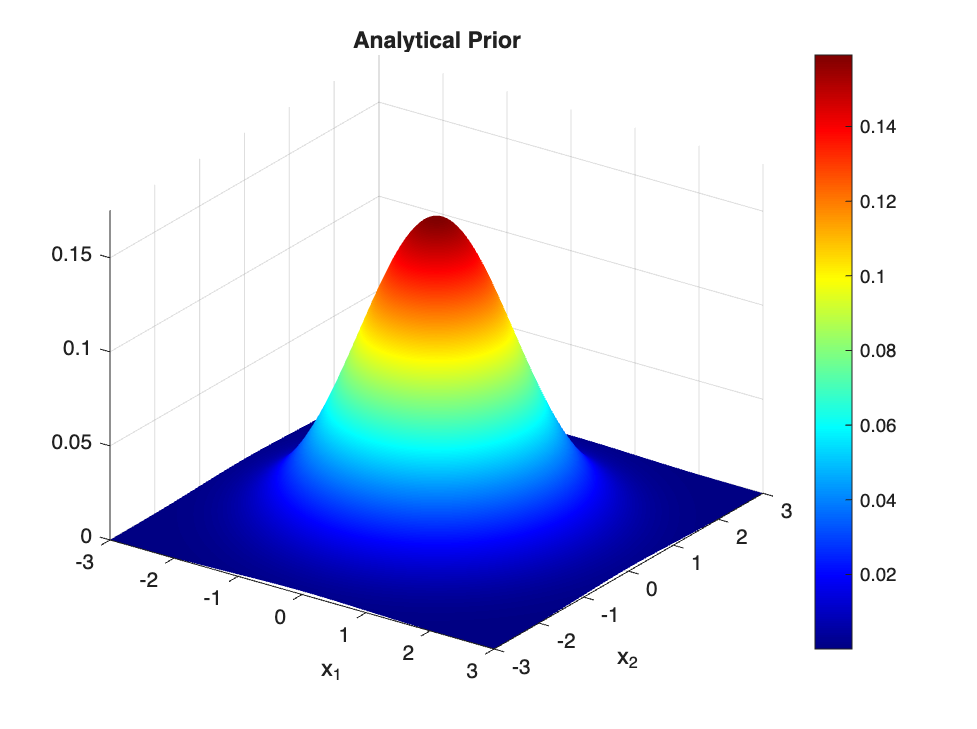}      &\includegraphics[width=0.3\linewidth]{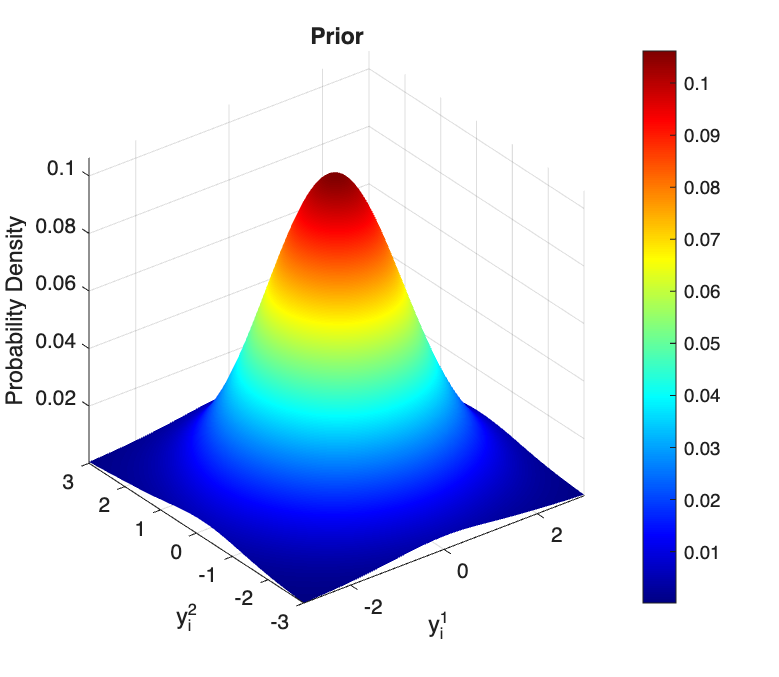}   & \includegraphics[width=0.3\linewidth]{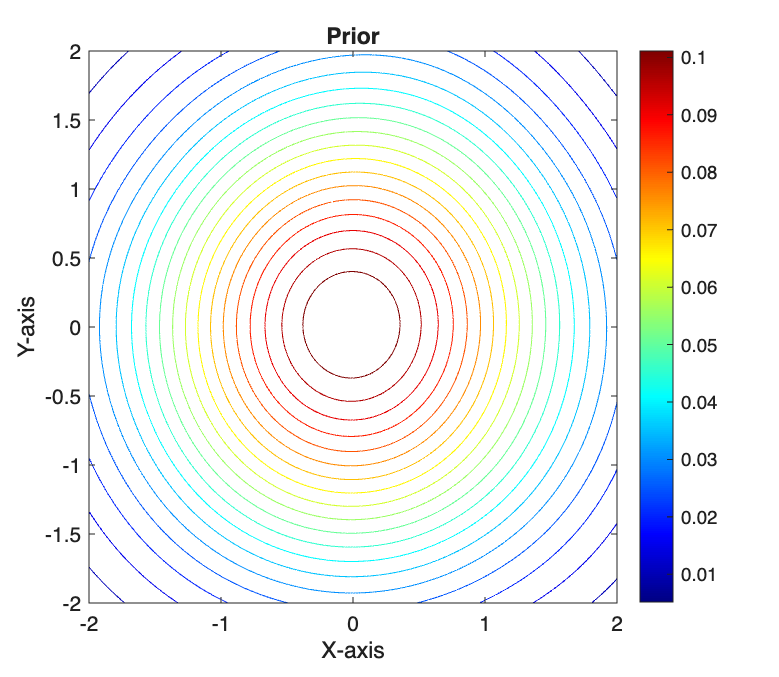} \\
 \includegraphics[width=0.3\textwidth]{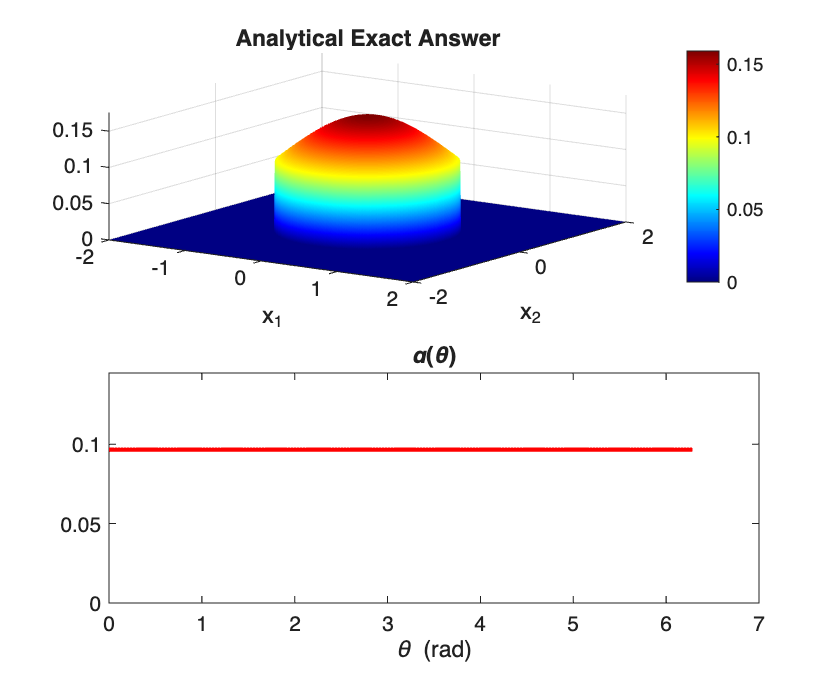}          &\includegraphics[width=0.3\linewidth]{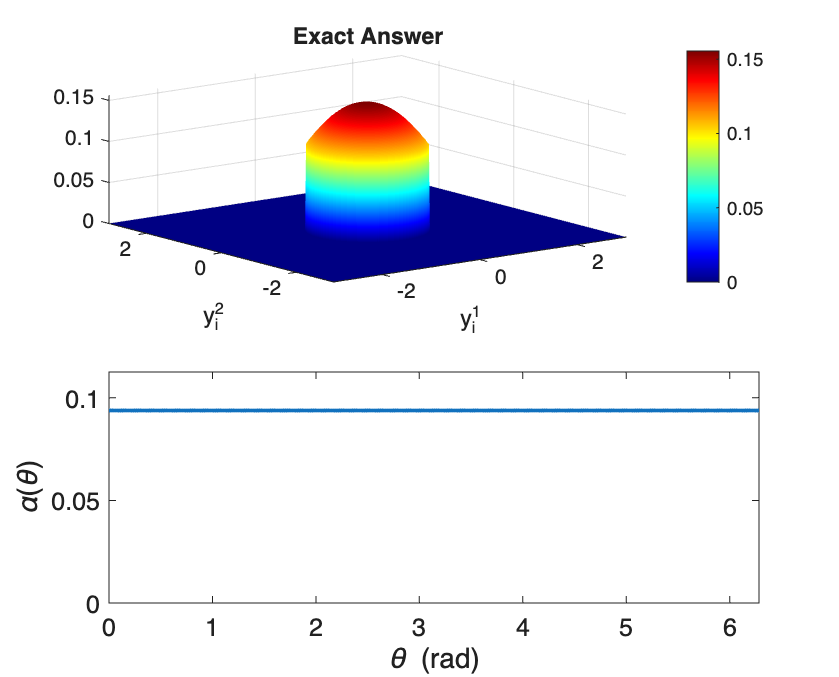}   & \includegraphics[width=0.3\linewidth]{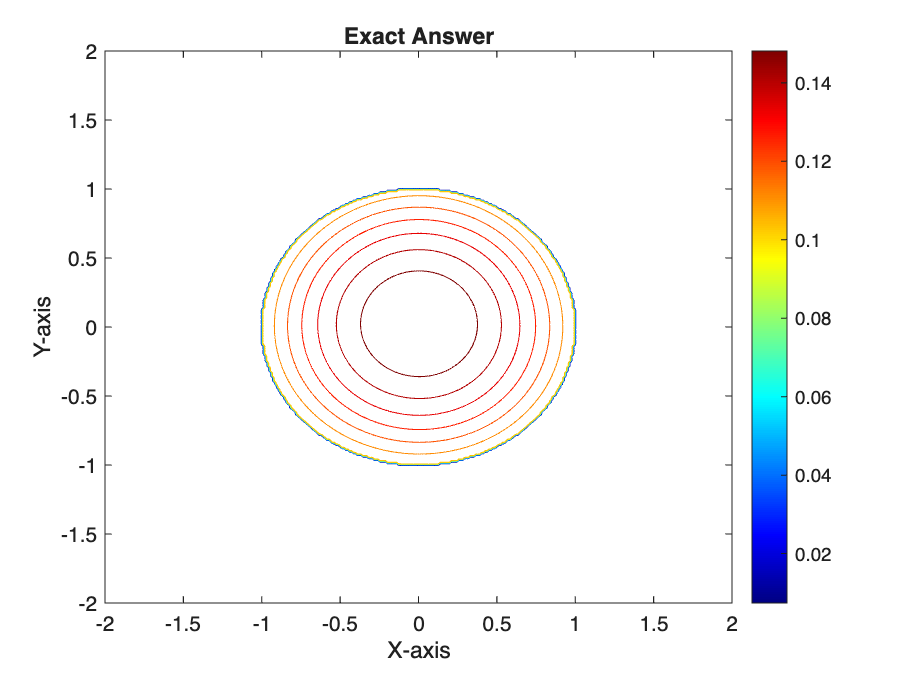} \\
&\includegraphics[width=0.3\linewidth]{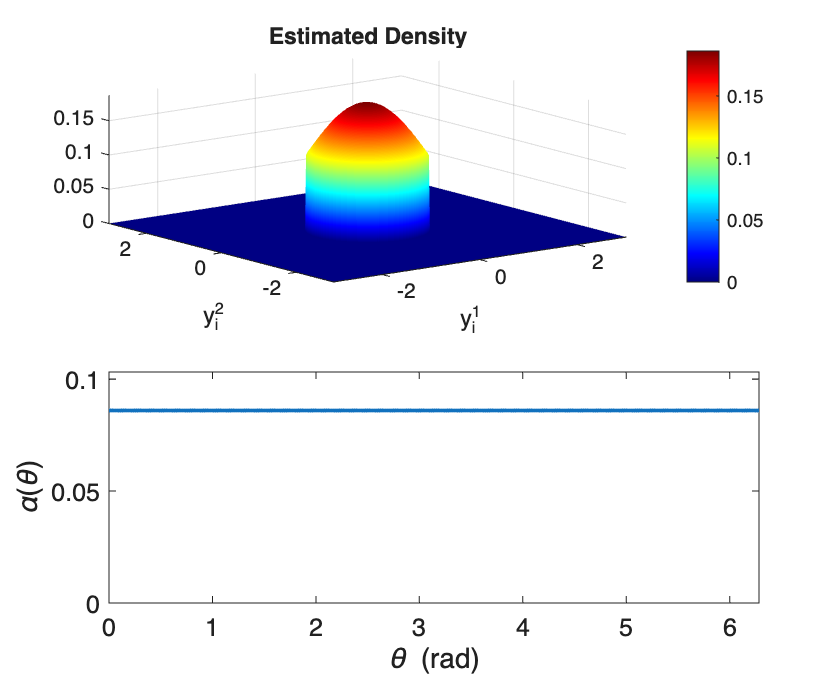}   & \includegraphics[width=0.3\linewidth]{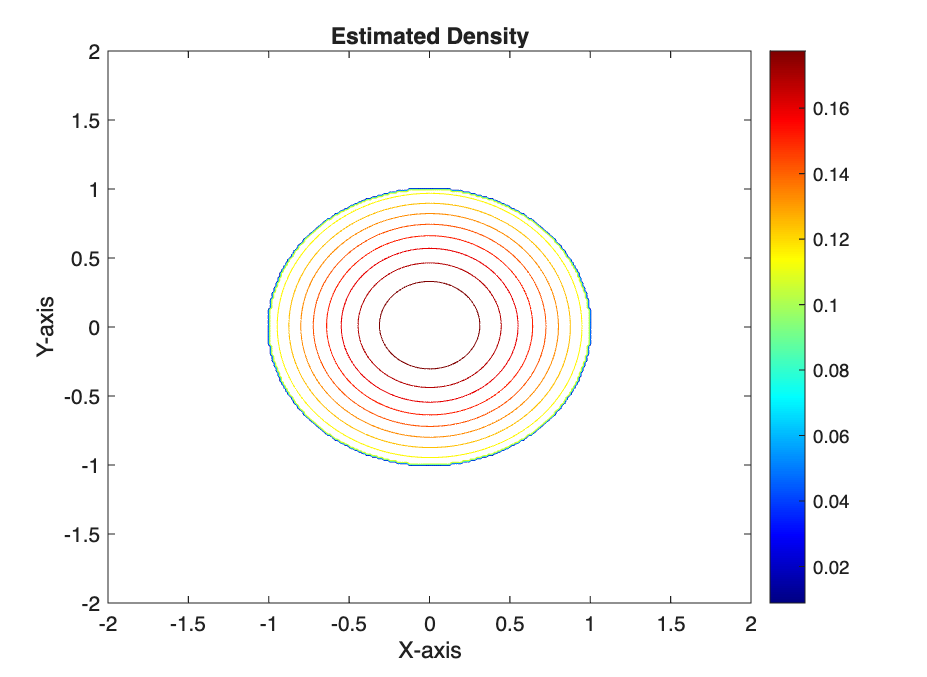} \\
&\includegraphics[width=0.27\linewidth]{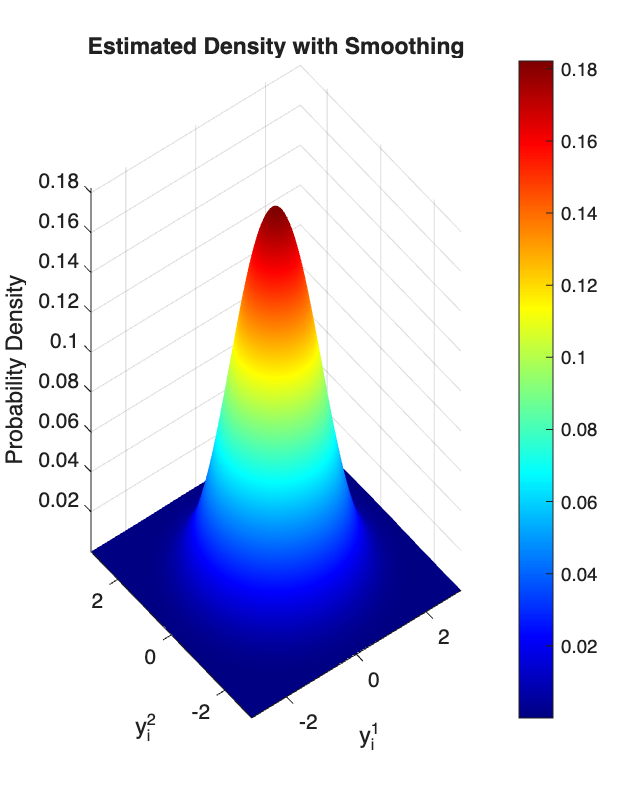}   & \includegraphics[width=0.27\linewidth]{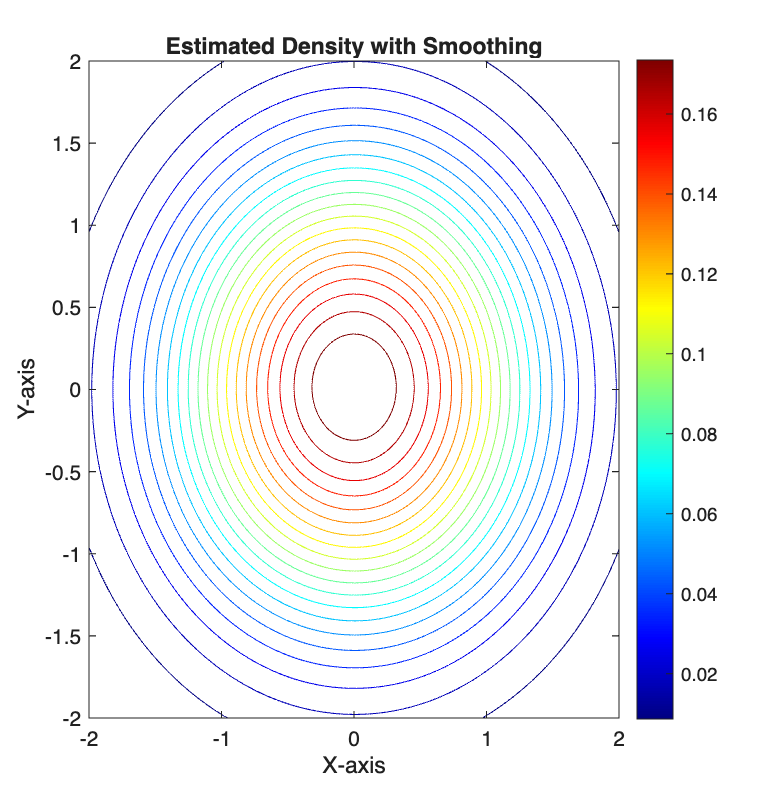} 
    \end{tabular}
\caption{The surface plots and contour plots of the prior, the exact solution and the estimated target measures using the proposed framework without and with regularization. All measures are constructed using samples via kernel density estimation techniques. The exact solution and estimated measure without regularization both contain a delta measure supported on the circle $\{(z_1,z_2)|z_1^2+z^2_2=R^2\}$, denoted as $\alpha(\tha)\delta(R,\tha)$ and are zero outside the circle.}
    \label{fig:result_2d_indicator}
\end{figure}

\color{black}

\begin{figure}
 \centering
\begin{tabular}{ccc}
\includegraphics[width = 0.25\linewidth]{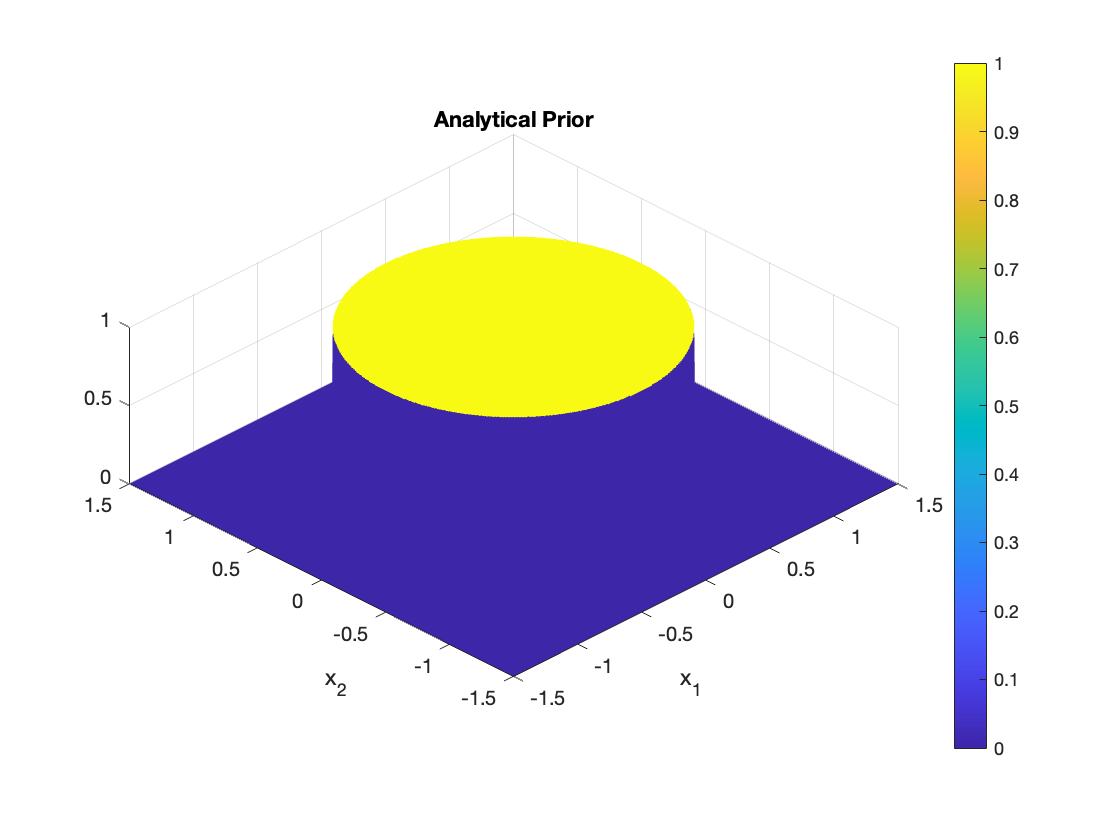}&\includegraphics[width=0.3\linewidth]{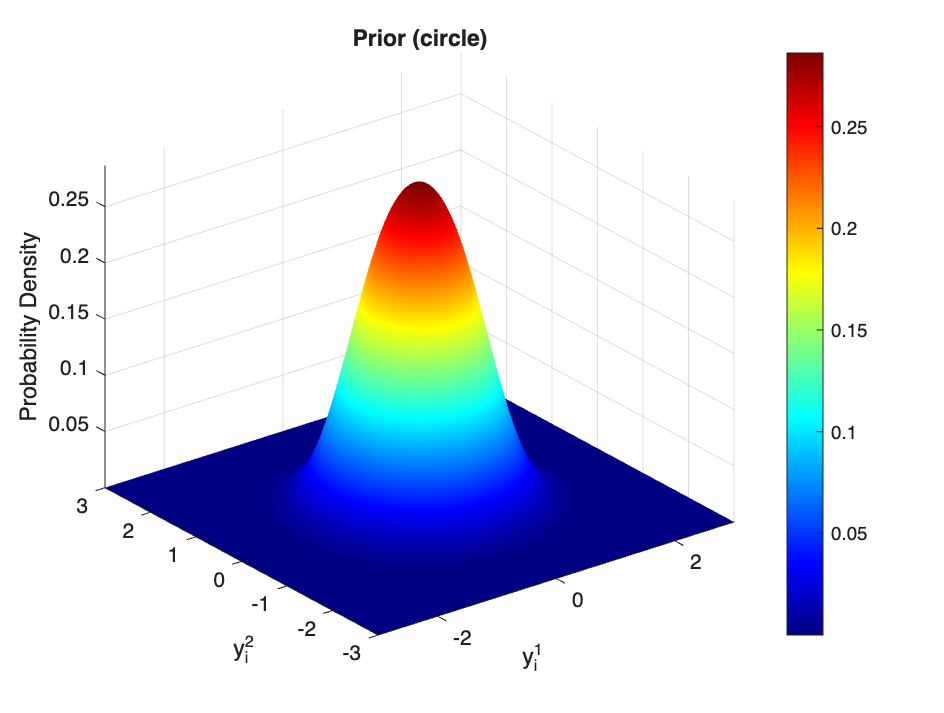}  & \includegraphics[width=0.25\linewidth]{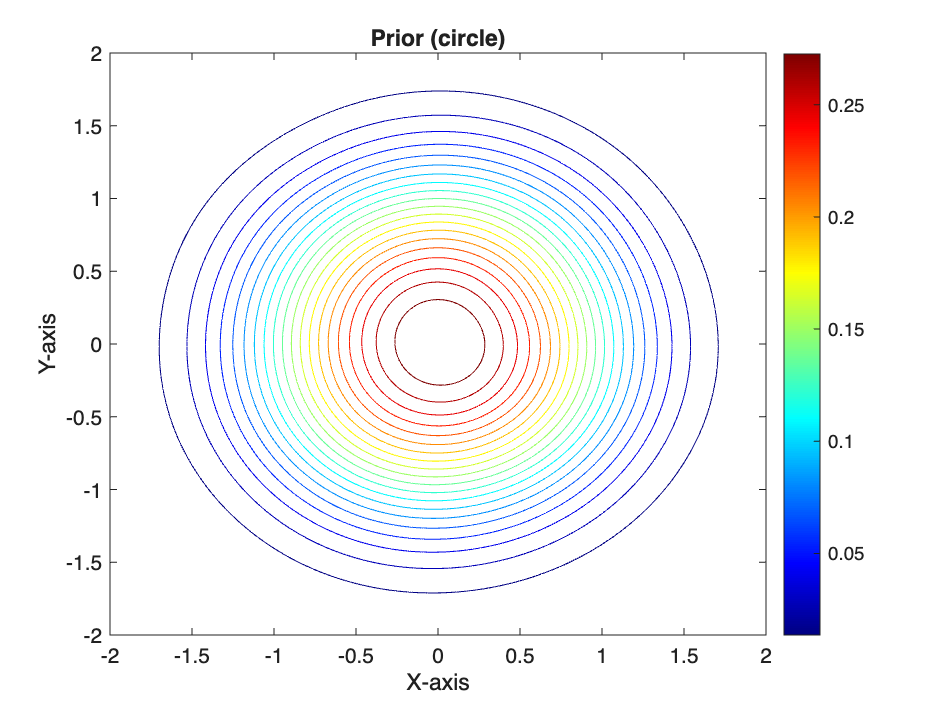} \\
\includegraphics[width = 0.25\linewidth]{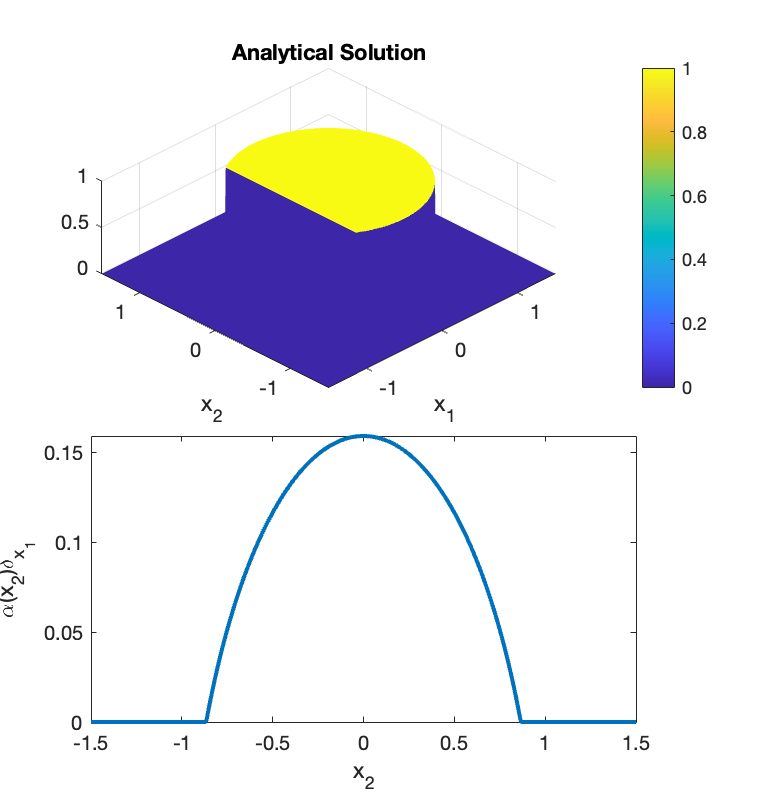}&
\includegraphics[width=0.3\linewidth]{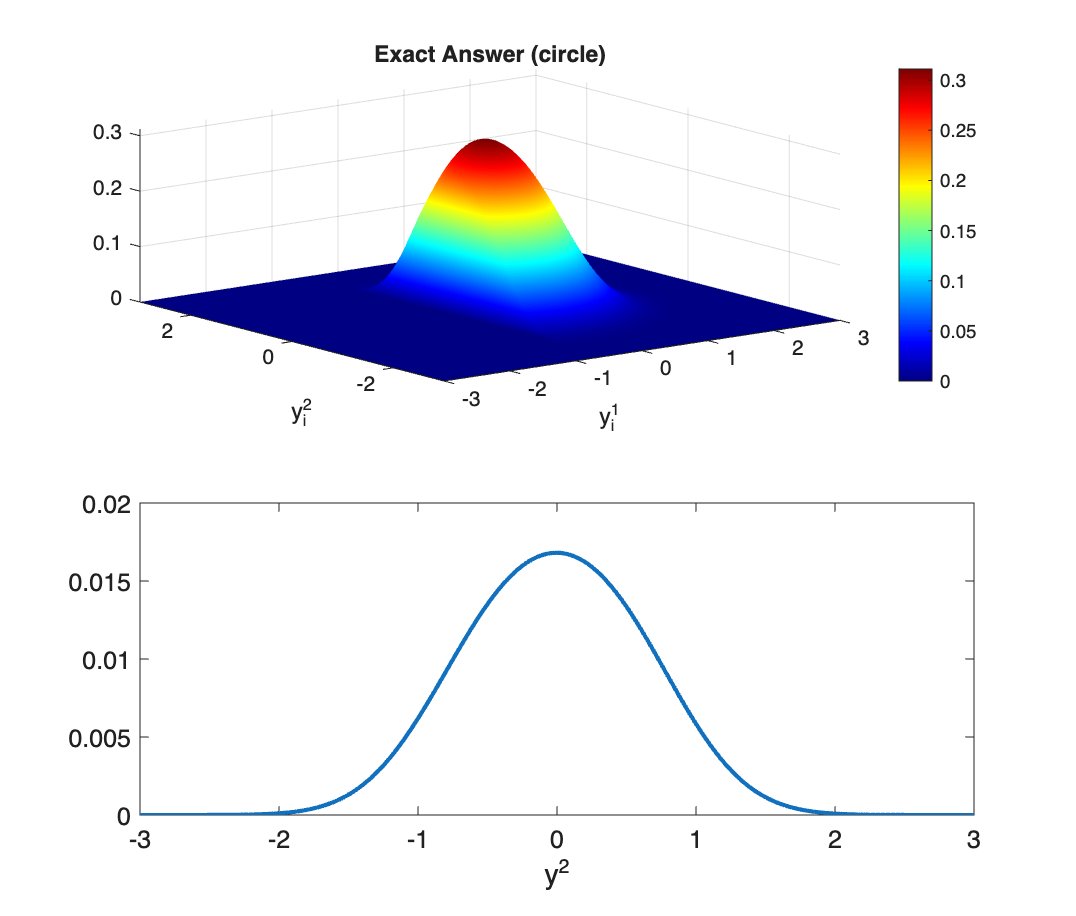}  & \includegraphics[width=0.3\linewidth]{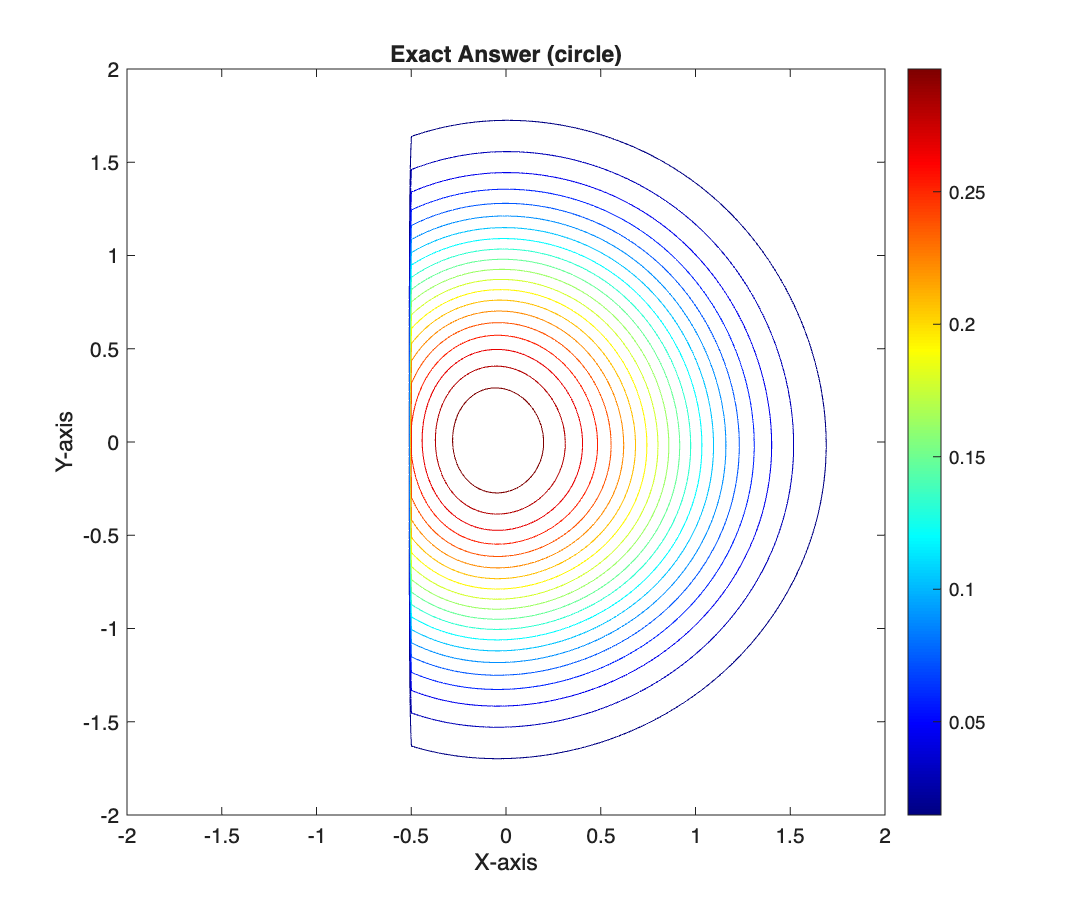} \\
&\includegraphics[width=0.3\linewidth]{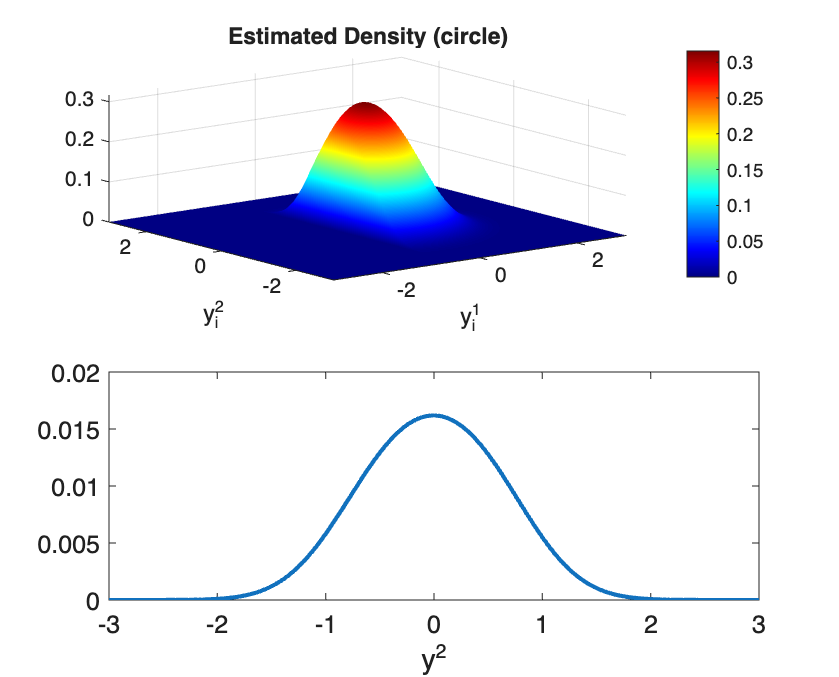}  & \includegraphics[width=0.3\linewidth]{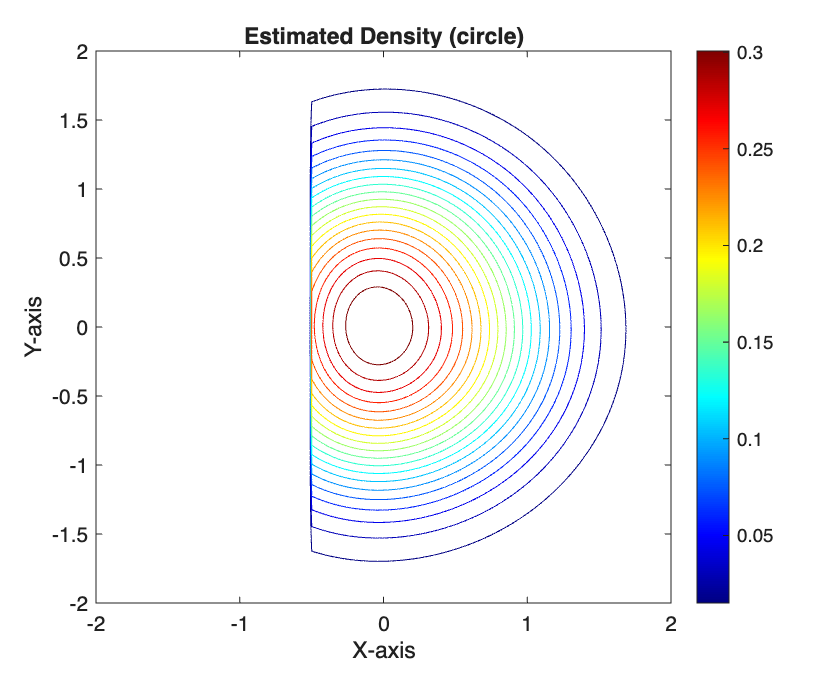} \\
&\includegraphics[width=0.3\linewidth]{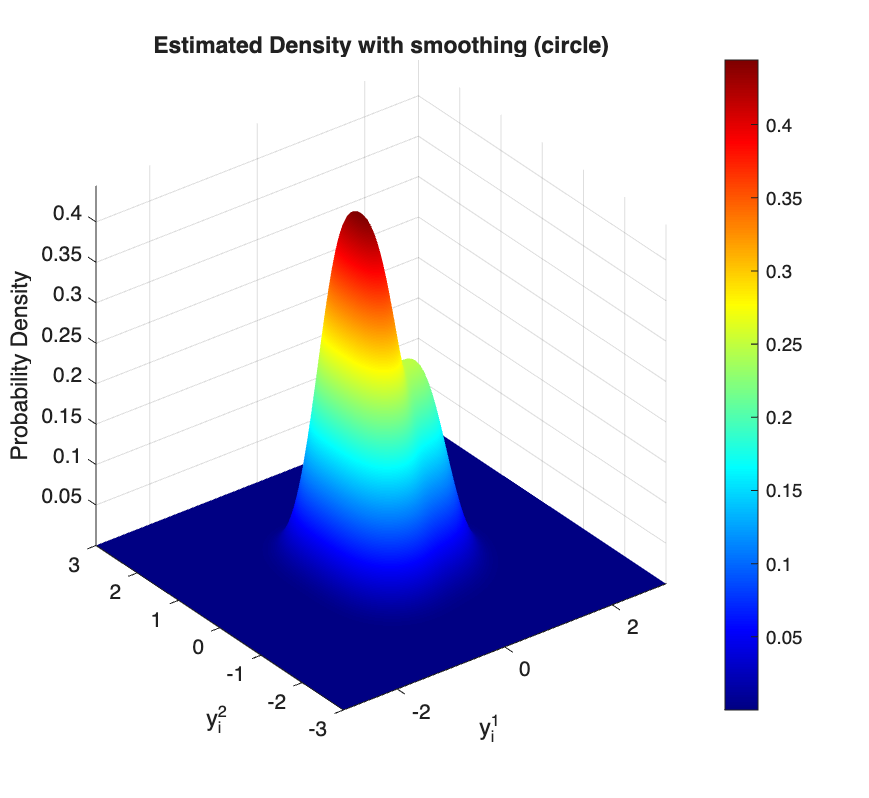}  & \includegraphics[width=0.3\linewidth]{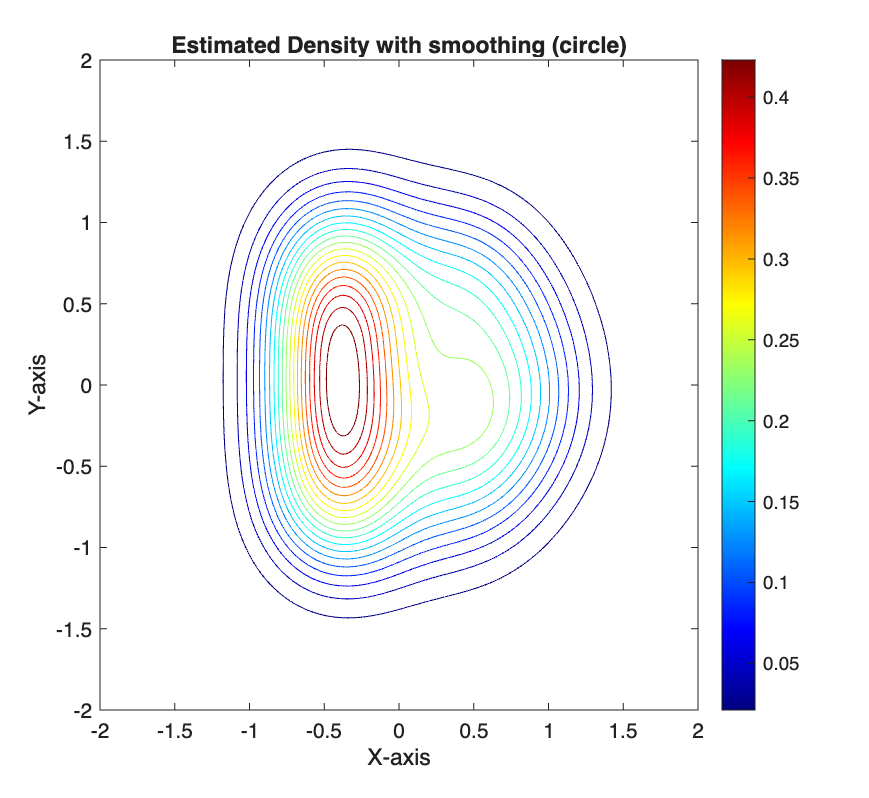} 
\end{tabular}
    \caption{The surface plots and contour plots of the prior, the exact solution and the estimated target measures using the proposed framework without and with regularization. All measures are constructed using samples via kernel density estimation techniques. The exact solution and estimated measure without regularization both contain a delta measure supported along the line$\{(z_1,z_2): z_1=R\}$ and are zero in $\{(z_1,z_2): z_1<R\}$ with $R=-0.5$. }
    \label{fig:2d_circle}
\end{figure}

\section{Case study}
\label{sec:case_study}
In this section, we apply the density estimation framework proposed in Section \ref{sec:formulation} to a proof-of-concept problem in quantitative finance, recovering the risk-neutral pricing measure of an underlying asset given market prices of a few vanilla options. This recovered measure is then used to price exotic options, whose market prices are assumed to be unknown. We use synthetic data to validate the framework, comparing its performance against a surrogate measure and the Kullback-Leibler (KL) divergence minimization approach \cite{avellaneda1997calibrating_KL_minimization}.

\begin{figure}[!t]
\centering
\centerline{
\subfloat{\includegraphics[width=0.5\linewidth]{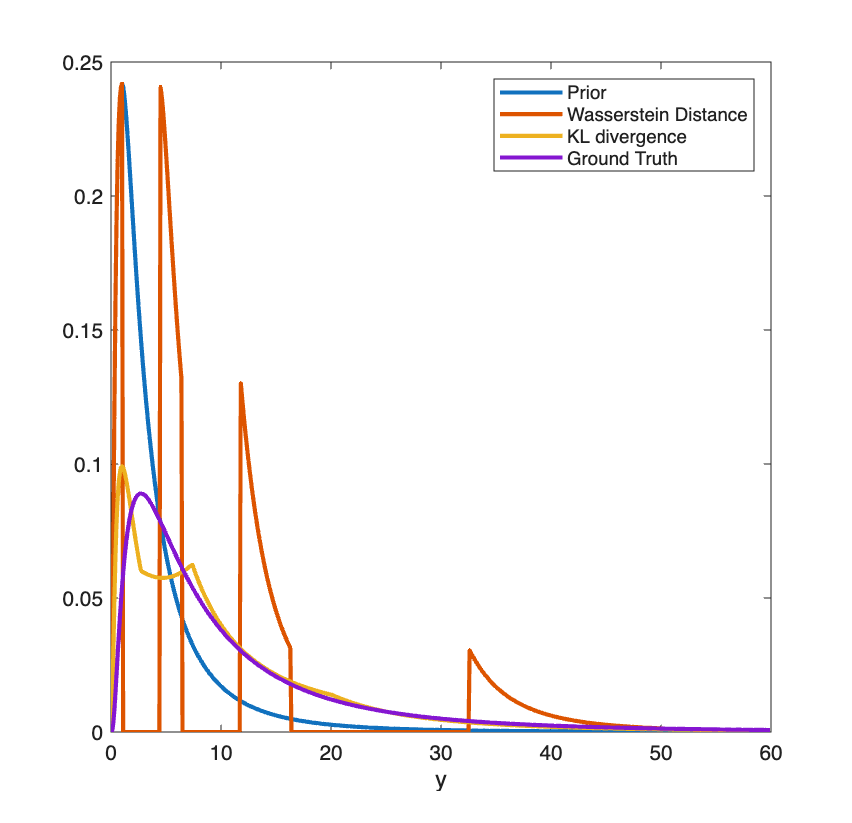}  }
\subfloat{
\includegraphics[width=0.5\linewidth]{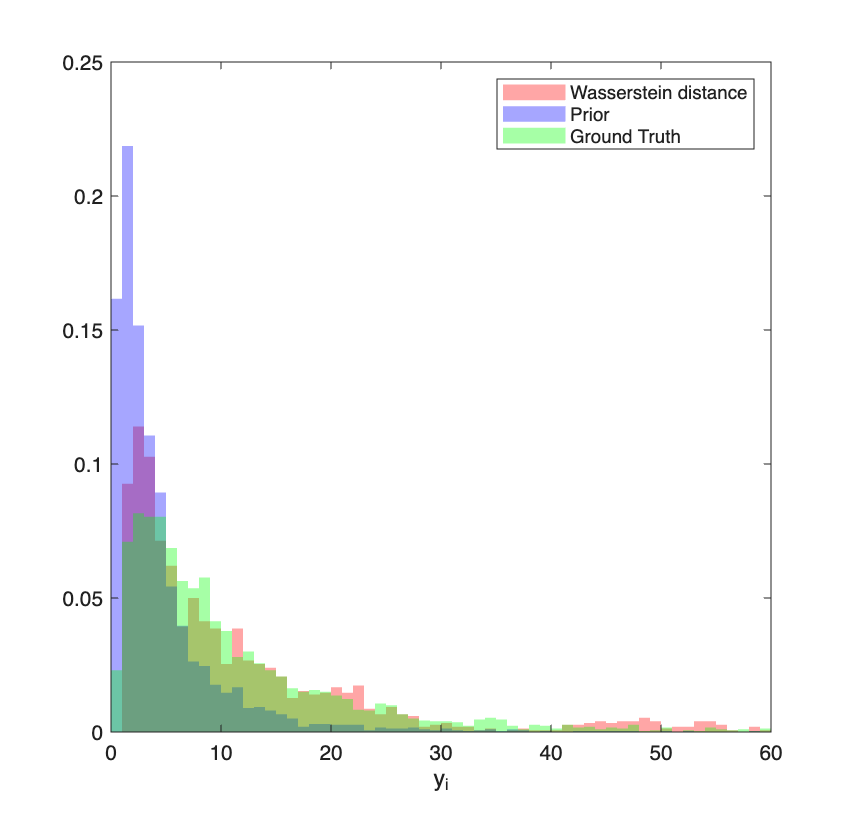} }}

    \caption{Left: the estimated pricing measure of the asset by the proposed framework and the surrogate measure. Right: the estimated histogram of the sampled target measure of the asset by the regularized framework, the surrogate  measure and the The prior measure. The prior is $p_\rho$ and $\bar{f}_k$ are computed based on $p^{K}_\mu$ and threshold prices $\om_1,\dots\om_K$ with $K=3$.}
\label{fig:result_wasserstein_k_3}
\end{figure}

\color{black}
\paragraph{Background} An option \cite{bouzoubaa2010exotic_option} is a contract to buy (`call') or sell (`put') a specific quantity of an underlying asset or instrument at a specified strike price on or before a specified date. Vanilla options are simple call/put options with no special or unusual features that can be exercised when the strike place performs better than the underlying asset price does. Mathematically, a vanilla call option can be expressed as the single RELU function in \eqref{func:relu_1}, where $x>0$ is the asset price and $\om>0$ the strike price. Exotic options are more complex in terms of payment structure, expiration dates and strike prices. 

The valuation of exotic options has been a research focus in finance for decades \cite{avellaneda1999introduction_pricing_risk}. In order to estimate the value of exotic options on an underlying asset given only the pricing of its vanilla options, we proceed in two steps. We first use this article's constrained optimal transport framework in lieu of the KL divergence adopted in \cite{avellaneda1997calibrating_KL_minimization}, to estimate the  underlying risk-free measure, and then utilize this estimated pricing measure to price the exotic option. We compare the results with their counterparts using the prior measure, the one estimated via KL divergence method and the ground-truth underlying pricing measure of the asset.   

\paragraph{Synthetic examples}

We illustrate our model with several synthetic examples, where a surrogate  underlying pricing measure, based on which the values of various exotic options are calculated, is assumed. 

We assume that the surrogate  underlying pricing measure of an asset is a log-normal distribution with parameters $\mu_K = 2, \sigma_K=1$, or Lognormal(2,1). 
We are given the true pricing of $K=3$ vanilla call options as constraints, with logarithm of threshold prices $\om_k = -\frac{K-1}{2}+k+1, k=1,2,\dots,K$. The equality constraint corresponding to the vanilla call options are given in \eqref{func:relu_1} with $\bar{f}_k$ computed based on $p^K_\mu$: $\bar{f}_k = \int_{\om_K}^{\infty}{(y-\om_K)p^K_\mu(y)dy}$.

Choosing as prior measure of the price $p_\rho = \text{Lognormal}(1,1)$, we depict the results generated by the KL divergence method and by our formulations in Figure \ref{fig:result_wasserstein_k_3}. We can observe that our restored measure is closer to the true underlying price measure of the asset than the the one estimated by KL divergence method. 

\begin{table}[h!]
\begin{tabularx}{\textwidth}{l *{5}{>{\centering\arraybackslash}X}}
\toprule
& Prior   & Wasserstein & KL Divergence & Smooth Wasserstein & surrogate  \\
 \midrule
Up-and-out       & 2.7121  & 10.355      & 9.7641        & 10.313             & 10.053       \\
& \blue{0.6735} & 5.7158      & 5.3727        & 5.2194             & 5.7588       \\
\midrule
Cash-or-nothing  & 2.0000       & 3.3446      & 3.4580         & 3.6720              & 3.7319       \\
& 2.0000       & 3.3446      & 3.1443        & 3.1307             & 3.3645       \\
\midrule
Asset-or-nothing & 3.0988  & 12.053      & 11.057        & 11.554             & 11.297       \\
                 & 2.2408  & 10.125      & 10.034        & 11.169             & 10.178      \\
\bottomrule
\end{tabularx}
\caption{Estimation of pricing of several exotic options using different methods. The prior  $p_\rho = \text{Lognormal}(1,1)$ and the surrogate  pricing measure is $\text{Lognormal}(2,1)$. The three options $g_1,g_2,g_3$ are given in \eqref{eq:g1}\eqref{eq:g2}\eqref{eq:g3} respectively.}
\label{tab:pricing_options}
\end{table}

\paragraph{Pricing exotic options} We now apply the estimated pricing measure to price several exotic options: down-and-out call option, cash-or-nothing option and asset-or-nothing option.

A down-and-out call option $g_1$ refers to an exotic option that provides a payoff only when the price lies above a certain strike price {$s_1$} and payoff is the difference between the current price and the threshold {$H_0$} as expressed as follows 
\begin{equation}
    g_1(x) = \max(x-s_1,0)\mathbf{1}_{x\geq H_0}.
    \label{eq:g1}
\end{equation}
Here we select $H_0 = 20.0855, s_1 = 1.6487$ and $H_0 = 2.7183, s_1 = 2.1170$, respectively.  

A cash-or-nothing option refers to returning a fixed cash when the pricing of the asset exceeds the strike price, or zero if it does not. With $s_2$ being the strike price, the price of an aseet-or-nothing option $g_2$ can be expressed as 
\begin{equation}
    g_2(x) = C\mathbf{1}_{x\geq s_2},
        \label{eq:g2}
\end{equation}
with $C$ being the cash value. Here we select $C=4$ and $s_2 = 2.7381, 1.6487$ respectively.

  An asset-or-nothing option refers to returning the value of the asset when the pricing of the asset exceeds the strike price, or zero if it does not. With $\om$ being the strike price, the price of an aseet-or-nothing option $g_3$ can be expressed as 
\begin{equation}
  g_3(x) = x\mathbf{1}_{x\geq s_3}, 
        \label{eq:g3}
\end{equation}
where we select $s_3 = 7.3891, 4.4817$ respectively.


  We utilize our estimated pricing measures of the assets obtained by our method and the KL divergence method to compute pricing of the above-mentioned exotic options and compare them with the true pricing (i.e. pricing computed from the surrogate  pricing of asset measures $p_\mu$). Table \ref{tab:pricing_options} lists the computed prices, and Table \ref{tab:relative_error_option} shows the relative errors of each estimate.
The results in Table \ref{tab:relative_error_option} show that all three estimation methods significantly outperform the prior, confirming that assimilating the information in the constraints is critical. Notably, a Wasserstein-based method (either the standard or smooth variant) achieves the lowest relative error in five of the six test cases. This suggests that the Wasserstein distance, as minimized by our framework, is an effective metric for regularizing the option pricing calibration problem.
\begin{table}[h!]
\begin{tabular}{|l|llll|}
\hline
 Options                & Prior   & Wasserstein & KL Divergence & Smooth Wasserstein \\
                 \hline
Up-and-out       & -0.8831 & -0.0075     & -0.0670       & -0.0937            \\
                 & -0.7302 & 0.0300      & -0.0288       & 0.0258             \\
                 \hline
Cash-or-nothing  & -0.4056 & -0.0059     & -0.0654       & -0.0695            \\
                 & -0.4641 & -0.1038     & -0.0734       & -0.0161            \\
                 \hline
Asset-or-nothing & -0.7799 & -0.0053     & -0.0141       & 0.0973             \\
                 & -0.7257 & 0.0669      & -0.0212       & 0.0227    \\
\hline
\end{tabular}
\caption{Relative error rates from the pricing of several exotic options using the estimated asset pricing measure with different frameworks: the prior measure, the proposed framework and the KL divergence framework. }
\label{tab:relative_error_option}
\end{table}

\section{Conclusions}
\label{sec:conclusion}
In this paper, we develop a modified optimal transport framework that incorporates equality constraints to meet specific requirements. We provide both explicit analytical solutions and numerical implementations to illustrate the differences between our approach and the classical KL divergence method. Additionally, we design algorithms tailored for sample-based prior distributions.

A promising direction for future research is to consider scenarios where neither the source measure nor the target measure is explicitly specified, but only subject to certain constraints. Such situations arise frequently in operations research, where supplies and demands are only constrained rather than fully defined.
\bibliographystyle{plain}
\bibliography{sample}

\subsection{The OT solution for the indicator function}
\label{appd:prop_indicator_function}
We provide the solution for \eqref{opt:constrained_ot_monge} when $K=1$ and $f$ is an indicator function \eqref{func:support} in the following proposition:
\begin{prop}
    Let $X=Y=\mR$ and $c(x,y)= \frac{1}{2}\|x-y\|^2_2$ be the pairwise cost for the objective function of the constrained optimization problem in \eqref{opt:constrained_ot_monge}, with $K=1$.
    Let $f_1$ be the indicator function in \eqref{func:support} with $\bar{f}_1=0$. Assume that $\rho\in\mc{B}(\mR)$ is the prior measure. Then the optimal target measure $\mu^*\in\mc{B}(\mR)$ is given as follows:
    \begin{equation}
        \mu^* = \rho|_{[a,b]} + c_a\delta(a) + c_b\delta(b),
    \end{equation}
    where $\rho|_{[a,b]}$ refers to the restriction of the measure $\rho$ within the interval $[a,b]$, that is, for all $A\subset \mR$,
    \begin{equation}
        \rho|_{[a,b]}(A) = \rho(A\cap [a,b]),
    \end{equation}
    $\delta(x_0)$ is the atomic measure centered at $x_0$ and 
    \begin{equation}
        c_a = \int_{-\infty}^a{p_\rho(z)dz},\;\;c_b = \int_{b}^{\infty}{p_\rho(z)dz}.
    \end{equation}
    \label{prop:indicator_single_constraint}
\end{prop}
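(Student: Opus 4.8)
The plan is to exploit the fact that, in the Monge formulation \eqref{opt:constrained_ot_monge}, the target $\mu$ is itself a free variable, so the only trace the single constraint leaves on the map $T$ is a pointwise restriction on its range. First I would translate the equality constraint: since $f_1 = \mathbf{1}_{\mathbb{R}\setminus[a,b]} \geq 0$ and $\bar f_1 = 0$, for a probability measure $\mu$ the condition $\int f_1\, d\mu = 0$ is equivalent to $\mu(\mathbb{R}\setminus[a,b]) = 0$, i.e. $\mu$ is supported in $[a,b]$. Because $\mu = T\#\rho$, this is in turn equivalent to requiring $T(x) \in [a,b]$ for $\rho$-a.e. $x$.

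With this reformulation, the optimization decouples completely: no marginal constraint couples the values of $T$ at different points, leaving only the range restriction $T(x)\in[a,b]$. Hence minimizing $J_c[T] = \int_{\mathbb{R}} \frac12 (x - T(x))^2\, d\rho(x)$ reduces to minimizing the integrand pointwise. For each fixed $x$, the function $y \mapsto (x-y)^2$ is strictly convex, and its unique minimizer over the closed interval $[a,b]$ is the Euclidean projection $P_{[a,b]}(x)$, namely $a$ if $x < a$, $x$ if $x\in[a,b]$, and $b$ if $x > b$. The resulting map $T^*(x) = P_{[a,b]}(x)$ is continuous, hence measurable, and is a legitimate transport map; it is also nondecreasing, consistent with Lemma \ref{lemma:monotone_map}.

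It then remains to compute the pushforward $\mu^* = T^*\#\rho$. The portion of $\rho$ already in $[a,b]$ is left in place, contributing $\rho|_{[a,b]}$; the mass on $(-\infty,a)$ is sent to the single point $a$, contributing an atom of weight $\rho((-\infty,a)) = \int_{-\infty}^a p_\rho(z)\,dz = c_a$; and symmetrically the mass on $(b,\infty)$ accumulates at $b$ with weight $c_b$. This yields exactly $\mu^* = \rho|_{[a,b]} + c_a\,\delta(a) + c_b\,\delta(b)$.

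The main point requiring care is the justification that the problem genuinely decouples into a pointwise minimization, i.e. that admitting the full freedom of $\mu$ leaves only the range constraint and no hidden coupling. I would make this rigorous by arguing that any admissible pair $(T,\mu)$ satisfies $T(x)\in[a,b]$ a.e., so $\frac12 (x-T(x))^2 \geq \frac12 (x-P_{[a,b]}(x))^2$ pointwise; integrating against $\rho$ gives $J_c[T] \geq J_c[T^*]$, and since $T^*$ is itself admissible the lower bound is attained, establishing optimality and (by strict convexity) uniqueness a.e. A secondary detail worth noting is that, because $\rho$ has a density, there are no atoms at $a$ or $b$ to complicate the bookkeeping of the pushforward.
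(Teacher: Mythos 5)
Your proof is correct and follows essentially the same route as the paper's: the nonnegativity of $f_1$ turns the constraint into the support condition $\mu(\mR\setminus[a,b])=0$, i.e. $T(x)\in[a,b]$ $\rho$-a.e., and the cheapest admissible map is the pointwise projection onto $[a,b]$, whose pushforward is exactly $\rho|_{[a,b]} + c_a\delta(a) + c_b\delta(b)$. Your write-up is in fact more rigorous than the paper's one-sentence argument, since you make explicit the pointwise lower bound $J_c[T]\geq J_c[T^*]$, the admissibility of $T^*$, and the a.e. uniqueness from strict convexity.
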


\begin{proof}
The indicator function constraint with $f_1$ in \eqref{func:support} and $\bar{f}=0$ requires that no mass is allowed in the target measure outside the interval $[a,b]$. Thus the least costly way to bring all mass to an interval is to leave the mass that is already within the interval in place, and bring all mass outside to the closest point in the interval, i.e. the interval?s closest endpoint, which leads to the result \eqref{opt:sol}.
\end{proof}

\subsection{Proof of proposition \ref{prop: relu_single_constraint}}
\label{appd: relu_1}
\begin{prop}             
     Let \eqref{opt:constrained_ot_monge} be the optimization problem with the function $f_k$ expressed in \eqref{func:relu} and $\bar{f}_k\geq 0$ (otherwise there is no optimal measure satisfying the equality constraint) and $k=1$. Let $\rho\in\mc{B}(\mR)$ be the {atomless} prior measure with $\supp \rho = X=\mR$. 
     Then we can derive the optimal target measure $\mu^*\in\mc{B}(\mR)$ as follows: for any $A\subset (-\infty,\infty) $,
     \begin{equation}
         \mu^*(A) = \rho(T^{-1}_*(A)),
     \end{equation}
     where $T_*:X\rightarrow Y,\;T_*(x) = x + \tau(x),\;$ and $\tau: X\rightarrow \mR$ is determined by parameters $x_*$ and $\lambda$ as follows:
     \begin{equation}
       \tau(x) = \begin{cases}
           \lam & x\in [x_*,\infty), \\
           0 & \mbox{otherwise}.
       \end{cases}  
     \end{equation}
     The parameters $\lam,x_*$ are determined by solving the following two candidate systems and select the solution yielding the lower value of the objective function.

     1) 
 \begin{equation}
 \begin{cases}
x_*=\om-\lambda, \\
      \int_{\om-\lam}^{\infty}{(x-\om+\lambda)p_{\rho}(x)dx} = \bar{f};
 \end{cases}
\end{equation}

2)
\begin{equation}
    \begin{cases}
      x_* =\om - \frac{\lam}{2}, \\
      \int_{x_*}^{\infty}{(x-\om+\lambda)p_\rho(x)dx} = \bar{f}, \\
      \int_{\om}^{\infty}{(x-\om)p_\rho(x)dx} < \bar{f}.
    \end{cases}
\end{equation}

\label{prop: relu_single_constraint}
 \end{prop}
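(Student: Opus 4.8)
The plan is to reduce the infinite-dimensional problem to a finite-dimensional one by first pinning down the qualitative shape of the optimal map and then optimizing over the few remaining scalar parameters. By Lemma \ref{lemma:monotone_map}, any optimal $T^*$ is the monotone rearrangement carrying $\rho$ onto the (a priori unknown) optimal $\mu^*$, so I may restrict attention to nondecreasing maps and minimize the displacement cost $\tfrac12\int(T(x)-x)^2\,d\rho$ over nondecreasing $T$ subject to the single constraint $\int \max(T(x)-\om,0)\,d\rho = \bar f$. Writing $\tau(x)=T(x)-x$, the objective becomes $\tfrac12\int\tau^2\,d\rho$.

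The key step is to show that a minimizer must be a single upward shift of a tail, $T_*(x)=x+\lam\,\mathbf 1_{[x_*,\infty)}(x)$. I would obtain this through a Lagrangian-sufficiency argument, which is attractive precisely because the feasible set is nonconvex (an equality constraint on the convex functional $\int f(T)\,d\rho$) so ordinary convex duality does not apply. Introducing a multiplier $\lam$ and forming $L(T,\lam)=\int\big[\tfrac12(T(x)-x)^2-\lam f(T(x))\big]d\rho+\lam\bar f$, the relaxed problem decouples across $x$ for fixed $\lam>0$, reducing to the scalar minimization of $y\mapsto \tfrac12(y-x)^2-\lam\max(y-\om,0)$. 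A short case analysis on the two linear pieces of the RELU shows the scalar minimizer is $y=x$ for $x$ below a threshold and $y=x+\lam$ for $x$ above it, the two branch values coinciding exactly at $x=\om-\tfrac{\lam}{2}$. Every minimizer of the relaxed problem is therefore a single-threshold shift, which confines the search to maps of the stated form, and such a map is automatically nondecreasing, hence admissible.

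Within this two-parameter family the two systems in the statement arise from a case split on where the bottom of the shifted block, $x_*+\lam$, lands relative to $\om$. Parametrizing by $(x_*,\lam)$, the cost is $\tfrac12\lam^2\big(1-F_\rho(x_*)\big)$, where $F_\rho$ is the cumulative distribution function of $\rho$, and the constraint reads $\int_{x_*}^\infty \max(x+\lam-\om,0)\,p_\rho\,dx=\bar f$. A stationarity computation for this one-constraint problem reproduces the interior relation $x_*=\om-\tfrac{\lam}{2}$, while the configuration in which the lowest transported point is pinned exactly at $\om$ (so that no transported mass is wasted below $\om$, where the RELU is flat) gives the boundary relation $x_*=\om-\lam$. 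Each relation, combined with the constraint, yields one of the two candidate systems; I would close the argument by verifying that $\lam\mapsto\int_{\om-\lam/2}^{\infty}(x-\om+\lam)p_\rho\,dx$ increases from $f_0:=\int_\om^\infty(x-\om)p_\rho\,dx$ to $\infty$ (so a unique admissible $\lam$ exists when $\bar f>f_0$, matching the side condition in system~2), and then comparing the two resulting values of $\tfrac12\lam^2(1-F_\rho(x_*))$ and retaining the smaller, exactly as prescribed.

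The hard part is the structural claim of the second paragraph: that no more elaborate monotone map — a non-constant displacement, several disjoint shifts, or a partial accumulation at $\om$ — can do better. The Lagrangian-sufficiency device is what makes this tractable without convexity, since the pointwise minimizer of $L(\cdot,\lam)$ is a clean two-valued shift/no-shift map, reducing the infinite-dimensional search to the scalar optimization above and leaving only the finite comparison between the two candidate systems. The remaining work is bookkeeping: the monotonicity of the constraint map used for existence and uniqueness of $\lam$, the degenerate case $\bar f=f_0$ (no transport), the necessity of $\bar f\ge 0$ (a RELU expectation cannot be negative), and the use of the atomless, full-support hypotheses on $\rho$ to guarantee that the threshold is well defined and that the tie set $\{x=\om-\tfrac{\lam}{2}\}$ is $\rho$-null.
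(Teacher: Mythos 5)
Your central device is genuinely different from the paper's, and where it applies it is stronger. The paper's proof runs: monotonicity (Lemma \ref{lemma:monotone_map}) gives a crossing point $x_*$; below it the shift is zero; on $[x_*,\infty)$ one substitutes $\iota(x)=\tau(x)-(\om-x)\ge 0$ and applies KKT with a multiplier $u(x)\ge 0$ and complementary slackness to get $\iota=(x-\om+\lam)_+$; a case split on $x_*$ versus $\om-\lam$, plus first-order conditions in $x_*$, then produces the two candidate systems. This is a purely stationarity-based (necessary-conditions) argument. You instead relax the constraint globally and minimize the Lagrangian pointwise in $y$ for each $x$: for $\lam>0$ the scalar minimizer is the stay-or-shift map with tie at $x=\om-\tfrac{\lam}{2}$, and Lagrangian sufficiency upgrades this to \emph{global} optimality, among all maps rather than only monotone ones, of the system-2 solution whenever a feasible multiplier exists, i.e.\ exactly when $\bar f>f_0:=\int_\om^\infty(x-\om)p_\rho(x)\,dx$, which is system 2's side condition. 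This both explains where the relation $x_*=\om-\tfrac{\lam}{2}$ comes from and delivers more than the paper's KKT computation does.

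There is, however, a genuine gap in how you obtain candidate system 1, and it sits exactly in the regime where that system matters. Your decoupling pins the threshold at $\om-\tfrac{\lam}{2}$; it does not ``confine the search'' to the full two-parameter family $(x_*,\lam)$, so the subsequent case split on where $x_*+\lam$ lands relative to $\om$ is not licensed by the sufficiency device. When sufficiency applies ($\bar f>f_0$), system 1 is redundant: your own argument shows the system-2 map is globally optimal, so the prescribed comparison can never select system 1. When $\bar f\le f_0$, no feasible $\lam>0$ exists, and the pointwise minimizers of the relaxed Lagrangian for $\lam<0$ are \emph{not} pure shifts: they are clamp maps (identity below $\om$, $T(x)=\om$ on $(\om,\om+|\lam|)$, downward shift above), so the structural reduction to maps of the stated form fails precisely where system 1 is supposed to take over. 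To close the proof you must either handle that regime the paper's way --- keep $x_*$ free, let the clamp region $\{x: T(x)=\om\}$ appear in the KKT analysis (the paper's case b), and collapse it by a first-order condition --- or run your pointwise minimization for $\lam\le 0$ and relate its clamp-type minimizer to the equation in system 1. (To be fair, the paper's own treatment of $\bar f\le f_0$ is also loose: for $\lam<0$ the pure-shift map of system 1 is not even nondecreasing, so this is a defect you share with the paper rather than one you alone must answer for; but your write-up asserts the reduction rather than proving it.)
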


\begin{proof}

  From lemma \ref{lemma:monotone_map} we have the following claim. 

\begin{claim}
\label{claim:monotone_map}
    Suppose there exist $x_0,x_1\in X$ such that $T(x_0) \geq \om,\; T(x_1)\leq \om$. Then we can find $x_*\in X$ such that 
  \begin{equation}
  \begin{cases}
       x + \tau(x)\leq \om & x\leq x_*, \\
        x + \tau(x)> \om & x>x_*.
        \label{assume:monotone}
  \end{cases}
  \end{equation}
\end{claim} 
\begin{proof}[Proof of claim 1]
  This is a direct result of lemma \ref{lemma:monotone_map}. 
\end{proof}
The previous claim helps build the following claim:
\begin{claim}
Let $f_1$ be the `RELU' function as in \eqref{func:relu_1}. Then the equality constraint in \eqref{opt:constrained_ot_monge} can be rewritten in terms of the source distribution as follows:
\begin{equation}
    \int_{x_*}^{\infty}{(x+\tau(x)-\om)p_\rho(x)dx} = \bar{f}.
    \label{eq:target_prior_transform}
\end{equation}
\end{claim}

\begin{proof}[Proof of claim 2]
    This is an application of the rule of lazy statistician. Using the notation $ y = T(x) = x+\tau(x)$ as well as the theorem $\mE_{X\sim f_X(x)}[g(X)] = \int {g(x)f_{X}(x)dx}$, where $g(x) = (x + \tau(x) -\om)_+$.  

    By lemma \ref{lemma:monotone_map} we know $ (x+\tau(x)-\om)\mathbf{1}_{x\geq x_*} = (y-\om)\mathbf{1}_{y\geq \om}=:g\circ T^{-1}(y)$. Thus $E_{X}[g(X)] = E_{Y}[g\circ T^{-1}(Y)]$, the right hand side of which corresponds to the RHS of the equation \eqref{eq:target_prior_transform}.
\end{proof}
The previous claim substitutes the the original equality constraint in the optimization problem \eqref{opt:constrained_ot_monge} with \eqref{eq:target_prior_transform} equivalently. 

Regarding the shifting on $[-\infty,x_*]$, we have 
\begin{claim}
    For $x\leq x_*$ the optimal shift 
function
\begin{equation}
    \tau^*(x) = 0.
    \label{sol:relu_no_shift}
\end{equation}
\end{claim}
\begin{proof}[Proof of claim 3]
    By claim \ref{lemma:monotone_map}  for $x\leq x_*$, we have $x+\tau(x)\leq \om$  so $(x+\tau(x)-\om)_+ = 0$. By the expression of $f$ in \eqref{func:relu}, when $x\leq x_*$ the term $x+\tau(x)$ contributes nothing to the expectation constraint \eqref{opt:constrained_ot_monge} for $K=1$ no matter how large $\tau(x)$ is, yet $|\tau(x)|^2$ is always nonnegative.   
\end{proof}

When $x>x_*$ (or $x + \tau(x)>\om$),  we denote $\iota(x) = \tau(x) - (\om - x)$. Thus $\iota(x)\geq 0$. Then the optimization problem is converted into 
\begin{equation}
\begin{aligned}
     \underset{\substack{ \iota(x),x_* \\ \iota(x)\geq 0}}{\min}&\frac{1}{2} \int_{x_*}^{\infty}{|\iota(x) + \om - x|^2 p_\rho(x)dx}, \\
     \text{s.t.}&\;\int_{x_*}^{\infty}{\iota(x)p_\rho(x)dx} = \bar{f}.
     \label{opt:x_geq_x_star}
\end{aligned}
\end{equation}

 We associate a Lagrange multiplier $u:X\rightarrow \mR,\;u\geq 0$ to the inequality $\iota\geq 0$.  The (partial) corresponding Lagrangian $\mc{L}$ is 
\begin{equation}
    \mc{L}(\iota,\lambda,u) = \int_{x_*}^{\infty}{[\frac{1}{2}|\iota(x) + \om - x|^2 - (u(x)+\lambda)\iota(x)]  p_\rho(x)dx}+ \lambda \bar{f}. 
\end{equation}
So taking the first-order condition with respect to $\iota$ (variational derivative), we get 
\begin{equation}
    0\equiv \frac{\delta \mc{L}}{\delta \iota} = (\iota(x) + \om - x) - u(x) - \lambda.
    \label{eq:first_order_condition}
\end{equation}

Considering also the complementary slackness condition 
\begin{equation}
u(x)\iota(x) = 0,\;\forall x\geq x_*.
\label{eq:complentary_slackness}
\end{equation}    
From \eqref{eq:complentary_slackness} we know for every $x\geq x_*$, either $u(x) = 0$ or $\iota(x) = 0$ or both. We convert the optimization problem over all $\tau$ (or equivalently, $\iota$) into the one over all $x_*\in\mR$ and $\tau(x)\mathbf{1}_{x\geq x_*}$. Here we discuss by cases.

\paragraph{Case 1: $u(x)=0$ } Then $\iota(x)>0$ strictly (from the monotone assumption, we already know $x+\tau(x)\geq \om$ and now the complementary slackness condition implies under this condition $\iota(x)\neq 0$, but according to \eqref{eq:first_order_condition}the sum is zero, so $\iota$ must be positive). And we further know $\iota(x)= x-\om +\lam$, which, together with the strict positivity of $\iota$, implies $x>\om-\lam$.

\paragraph{Case 2: $\iota(x)=0$. }Then $u(x)>0$ strictly,  and $u(x)=\om-x-\lambda$, which implies $x<\om-\lambda$.

In short, we can summarize the expression of $\iota_*$ when $x\geq x_*$ as  
\begin{equation}
 \iota_*(x) =    \begin{cases}
        x-\om + \lambda & x>\om -\lam \\
        0 & \mbox{o/w}.
    \end{cases}
\end{equation}

Regarding the relationship between $x_*$ and $\om-\lam$ there are two choices: either $x_*>\om-\lam$ strictly or $x_*\leq \om-\lambda$.

a)  Suppose the following holds true: 
\begin{equation}
    x_*> \om-\lambda(x_*).
    \label{ineq:x_lambda}
\end{equation}
We know that for all $x>x^*$ condition $x>\om-\lam$ is satisfied, which leads to the expression of $\iota$ as in case 1. 
 The complementary slackness conditions still apply.
Notice $\lambda$ is dependent on $x_*$ via the following equality constraint:
\begin{equation}
   H(x_*,\lambda):= \int_{x_*}^{\infty}{(x-\om+\lambda)p_\rho(x)dx} = \bar{f}.
    \label{eq:lambda2}
\end{equation}
With a little abuse of notation we here use $\lambda(x_*)$ to stress such dependency. 
Accordingly, 
\begin{equation}
    \tau^*(x) = \begin{cases}
        \lambda(x_*)& x>x_*,  \\
        0 & \mbox{o/w}.
    \end{cases}
    \label{sol:case_2_tau}
\end{equation}
 Applying first-order condition on it gives:
\begin{equation}
   x_*  = \om - \frac{1}{2}\lam(x_*).
     \label{eq:find_x_star}
\end{equation}

Equations \eqref{sol:case_2_tau},\eqref{eq:find_x_star} and \eqref{eq:lambda2} provide a candidate solution for $\tau^*$.
 We end up with the target distribution, which is split into two intervals $[x_*+\lam(x_*),\infty)$ and $(-\infty,x_*]$. 

In the meantime, to assure that $x_*$ in \eqref{eq:find_x_star} satisfies the condition \eqref{ineq:x_lambda}, there shall exist such $\lam>0$ that 
\begin{equation}
    Q_0(\lam):=\int_{\om-\frac{\lam}{2}}^{\infty}{(x-\om+\lam)p_\rho(x)dx} = \bar{f}.
\end{equation}
Since $\rho$ is atomless, $p_\rho$ is continuous and so is $Q_0$. Considering $Q_0(\lam)\rightarrow \infty$ as $\lam\rightarrow \infty$, the existence of such a $\lam>0$ is equivalent to $Q_0(0) := \int_{\om}^{\infty}{(x-\om)p_\rho(x)dx} < \bar{f}$.
 \color{black}

b) If on the other hand $$ x_*\leq \om-\lam,$$ we know for all $x>x_*$, either $x_*<x<\om-\lam$ or $x>\om-\lam$. Thus by discussion of complementary slackness we know 
\begin{equation}
    \iota(x)=
    \begin{cases}
        x - \om + \lam,  & x>\om-\lam, \\
        0, & x_*<x<\om-\lambda, 
    \end{cases}
    \label{sol:iota}
\end{equation}
Substituting the expression of $\iota(x)$ in \eqref{sol:iota} into the equality constraint, we get
\begin{equation}
\begin{aligned}
    H^-(\lam)&:= \int_{\om-\lam}^{\infty}{(x-\om+\lambda)p_{\rho}(x)dx} =: \bar{f}, 
\end{aligned}  
    \label{eq:lambda}
\end{equation}
which only concerns $\lam$. Since $\rho$ is atomless and the left hand side of \eqref{eq:lambda} is continuous and increasing in terms of $\lambda$, Thus the solution to \eqref{eq:lambda} exists, which is denoted as $\lambda_*$. Now we aim to find out $x_*$. From the monotone assumption \eqref{assume:monotone} we know an inequality constraint from the the expression of $\tau^*$ in \eqref{eq:tau}:

\begin{equation}
    \tau^*(x) = 
    \begin{cases}
        \lambda_*,  & x>\om - \lambda_*,\\
        \om - x, &  x_*<x<\om-\lambda_*,\\
        0, & x<x_*,
    \end{cases} 
    \label{eq:tau}
\end{equation}
where $\lam_*$ can be determined through \eqref{eq:lambda}. 
Using first-order condition we get 
\begin{equation}
    x_* = \om - \lam_*.
    \label{x_star_result}
\end{equation}

Summarizing the results of \eqref{x_star_result} and \eqref{eq:tau}, we can now know the optimal shifting function $\tau^*$ as. 
\begin{equation}
    \tau^*(x) = 
    \begin{cases}
        \lambda_*  & x\geq \om - \lambda_*=x_*,\\
        0 & x<\om-\lam_*.
    \end{cases} 
    \label{eq:tau_final_solution_case1}
\end{equation}

It is clear to see that for both situations a) and b) we can always find proper $\lam$ and $x_*$. So the best strategy is to compute the values of the objective function solved by both ways and select the way leading to a lower value. 
\end{proof}

\paragraph{Remark:} If $\supp p_\rho = (a,b]$ instead of $\mR$, the two candidates for $x_*,\lam$ still hold, provided that $x_*\in (a,b]$. In addition, there are two more candidate systems when $x_*=a, x_*=b$ and the corresponding $\lam$ are computed using \eqref{eq:lambda2}.

\paragraph{Remark:} The proof above carries over to the situation where $\rho$ is a discrete measure by replacing integrals with finite sums. Specifically, if $\rho$ is supported on finite points: $\{x_{(i)}\}_i$, that is,
\begin{equation}
    \rho = \sum_{i}{c_i\delta(x_{(i)})}.
\end{equation}
Then the candidate systems are 
 \begin{align}
\text{1)}\;x_*&=\om-\lambda, \;\sum_{i:x_{(i)}>x_*}{c_i(x_{(i)}-\om+\lambda)} = \bar{f}; \\
\text{2)}\;x_* &=\om - \frac{\lam}{2}, \;
     \sum_{i:x_{(i)}>x_*}{c_i(x_{(i)}-\om+\lambda)} = \bar{f}.
 \end{align}

 \subsection{Proof of proposition 3}
\label{appd:relu_K}
 \begin{prop}                     
     Let \eqref{opt:constrained_ot_monge} be the optimization with the function $f_k$ being the class of `RELU' functions in \eqref{func:relu} and $\bar{f}_k\geq 0$ (otherwise there is no optimal measure satisfying the equality constraint).
     Then we can derive the optimal target measure $\mu^*$ as follows: for any $A\subset (-\infty,\infty) $,
     \begin{equation}
         \mu^*(A) = \rho(A-\tau_0),
     \end{equation}
     where 
     \begin{equation}
     \tau_0 = \begin{cases}
         0 & \mbox{$x<x_{1*}$}, \\
         \tau_{k*} & x\in [x_{k*},x_{k+1*}],\;\; k=1,2,\dots,K-1\\
         \tau_{K*} & x>x_{K*}.
     \end{cases}
 \end{equation}
where $\tau_{k*},x_{k*}$ are to be specified in the following recursive way:

1) When $k=K$, we have 
\begin{equation}
    \tau_K = \begin{cases}
        \lambda_{K*}  & x\in[x_{K*}+\lam_{K*},\infty], \\
        0 & x\in[x_{K*},x_{K*}+\lam_{K*}],
    \end{cases}
\end{equation}
where $\lam_{K*}, x_{K*}$ are solutions of one of the following equations:

1-1) 
 \begin{equation}
 \begin{cases}
x_{K*}=\om_K-\lam_{K*}, \\
      \int_{\om_K-\lam_{K*}}^{\infty}{(z-\om_K+\lam_{K*})p_{\rho}(z)dz} = \bar{f}_K;
 \end{cases}
\end{equation}

1-2)
\begin{equation}
    \begin{cases}
      x_{K*} =\om_K - \frac{\lam_{K*}}{2}, \\
      \int_{x_{K*}}^{\infty}{(z-\om_K+\lam_{K*})p_\rho(z)dz} = \bar{f}_K.
    \end{cases}
\end{equation}

2) 
\begin{equation}
    \tau^*_k(x) = 
    \begin{cases}
        \om_{k+1} - x,  & x\in [\om_{k+1} - \lam_{k*},x_{k+1*}],\\
        \lambda_{k*}, &  x \in [\om_k-\lam_{k*} ,\om_{k+1}-\lam_{k*}],\\
        \om_k-x & x\in [x_{k*},\om_k-\lam_{k*}],
    \end{cases}  \label{eq:tau_final_solution_case1}
\end{equation}
For each $k\in [K-1]$. For simplicity, we define
\begin{equation}
\begin{aligned}
\Delta \om_k & = \om_{k+1}-\om_k;\\
    H^+_k(x,\lam) &= \int_{x}^{x_{k+1*}}{(z-\om_k+\lam)p_\rho(z)dz};\\
    H^{-}_k(x,\lam) & =\int_{x}^{\om_{k+1}-\lam}{(z-\om_k+\lam)p_\rho(z)dz} + \Delta \om_k\int_{\om_{k+1}-\lam}^{x_{k+1*}}{p_\rho(z)dz};\\
    \Delta \tilde{f}_k(\lam)& = \bar{f}_k- \bar{f}_{k+1} + \Delta \om_k\int_{x_{k+1*}-\lam}^{\infty}{p_\rho(z)dz}.
\end{aligned}
\end{equation}
Then $\lambda_{k*}$ and $x_{k*}$ satisfying one set of the following system equations: 
\begin{equation}
\begin{aligned}
 \text{2-1)} \qquad &x_{k*} = \om_k -\frac{1}{2}\lam_{k*},\;H^+_k(x_k,\lam_k) = \Delta\tilde{f}_k; \\
 \text{2-2)} \qquad    &x_{k*} = \om_k -\frac{1}{2}\lam_{k*},\;\;H^{-}_k(x_k,\lam_k)= \Delta\tilde{f}_k;\\
 \text{2-3)} \qquad  & x_{k*} = \om_k -\lam_{k*},\;\;H^+_k(x_k,\lam_k) = \Delta\tilde{f}_k; \\
 \text{2-4)} \qquad    &x_{k*} = \om_k -\lam_{k*},\;\;H^{-}_k(x_k,\lam_k)= \Delta\tilde{f}_k; \\
 \text{2-5)} \qquad& \lam_k\leq \om_{k+1}-x_{k+1}, \;\Delta\om_k\int_{x_{k*}}^{\infty}{p_\rho(z)dz} = \Delta f_k,\; .
\end{aligned}
\end{equation}
\label{prop:relu_K}
 \end{prop}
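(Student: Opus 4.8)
The plan is to combine the monotonicity guaranteed by Lemma \ref{lemma:monotone_map} with a backward-recursive decoupling of the $K$ coupled equality constraints, so that each step reduces to a two-walled version of the single-constraint analysis of Proposition \ref{prop: relu_single_constraint}. First I would invoke Lemma \ref{lemma:monotone_map}: since $T_*=\mathrm{id}+\tau$ is nondecreasing, there exist breakpoints $x_{1*}\leq\dots\leq x_{K*}$ with $T_*(x)<\om_1$ for $x<x_{1*}$, with $T_*(x)\in[\om_k,\om_{k+1}]$ for $x\in[x_{k*},x_{k+1*}]$, and with $T_*(x)>\om_K$ for $x>x_{K*}$; mass below $\om_1$ is left in place by the same no-shift argument as in the single-constraint proof. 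Rewriting each constraint on the source side (the rule-of-lazy-statistician step of Proposition \ref{prop: relu_single_constraint}) gives $\int_{x_{k*}}^{\infty}(T_*(x)-\om_k)p_\rho\,dx=\bar f_k$. Subtracting consecutive constraints and using $T_*(x)-\om_k=(T_*(x)-\om_{k+1})+\Delta\om_k$ on the tail $x>x_{k+1*}$ isolates the mass transported inside slab $k$:
$$\int_{x_{k*}}^{x_{k+1*}}(T_*(x)-\om_k)p_\rho\,dx+\Delta\om_k\int_{x_{k+1*}}^{\infty}p_\rho\,dx=\bar f_k-\bar f_{k+1},$$
which is precisely the relation encoded by $\Delta\tilde f_k$ once the already-fixed tail above $x_{k+1*}$ is substituted.

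Next I would exhibit the pointwise structure of $\tau$. Writing the Lagrangian with one multiplier $\mu_k$ per equality constraint, a source point landing in slab $j$ contributes to constraints $1,\dots,j$, so its effective multiplier is the partial sum $\lam_{j*}:=\sum_{i\leq j}\mu_i$. The unconstrained stationarity condition in slab $k$ is therefore $T_*(x)=x+\lam_{k*}$, a pure constant shift; imposing the two wall inequalities $\om_k\leq T_*(x)\leq\om_{k+1}$ through complementary slackness, exactly as in the single-constraint analysis but now with an additional upper wall, clips this shift and yields the three-piece form for $\tau^*_k$ in the statement: an interior constant shift $\lam_{k*}$, an accumulation onto $\om_{k+1}$ on the right (shift $\om_{k+1}-x$), and an accumulation onto $\om_k$ on the left (shift $\om_k-x$).

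The recursion then runs from $k=K$ downward. The top region $x>x_{K*}$ sees only the threshold $\om_K$ and has no upper wall, so its subproblem is literally the single-constraint problem of Proposition \ref{prop: relu_single_constraint} with $\bar f=\bar f_K$, producing the two candidate systems 1-1 and 1-2 for $(x_{K*},\lam_{K*})$. For $k<K$ the right boundary $x_{k+1*}$ is already fixed from step $k+1$, so slab $k$ is a constrained transport into $[\om_k,\om_{k+1}]$ with effective target $\Delta\tilde f_k$. The left-wall condition reproduces the single-constraint dichotomy, $x_{k*}=\om_k-\tfrac12\lam_{k*}$ when the left accumulation is absent versus $x_{k*}=\om_k-\lam_{k*}$ when it is present, while the upper wall adds the dichotomy between $H^+_k$ (interior shift stays below $\om_{k+1}$, no right accumulation) and $H^-_k$ (right accumulation onto $\om_{k+1}$ active). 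Their four combinations are systems 2-1 through 2-4, and the degenerate configuration in which the whole slab collapses onto $\om_{k+1}$ is system 2-5. Existence and uniqueness of $(x_{k*},\lam_{k*})$ within each candidate follow from the continuity and monotonicity of $H^{\pm}_k$ in $\lam$, granted by the atomlessness of $\rho$, and the admissible solution is, as before, the candidate attaining the lowest transport cost.

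The hard part will be justifying the decoupling rigorously: the $K$ equalities are genuinely coupled because each involves the entire tail above its threshold, so I must verify that the partial-sum multiplier $\lam_{k*}=\sum_{i\leq k}\mu_i$ truly block-diagonalizes the stationarity system slab by slab, and that optimizing the breakpoints $\{x_{k*}\}$ through their own first-order conditions is consistent with this block structure. Concretely, I would differentiate the reduced objective with respect to $x_{k*}$ while holding the recursively fixed $x_{k+1*}$ and upstream data, and check that the resulting condition is exactly the left-boundary relation ($x_{k*}=\om_k-\tfrac12\lam_{k*}$ or $x_{k*}=\om_k-\lam_{k*}$) appearing in systems 2-1 through 2-5. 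Once this is established, the remaining work, namely enumerating which wall and accumulation pattern is active and comparing objective values across candidates, is mechanical case-checking that mirrors the single-constraint argument.
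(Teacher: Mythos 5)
Your proposal is correct and follows essentially the same route as the paper's proof: monotonicity from Lemma \ref{lemma:monotone_map} giving breakpoints $x_{1*}\leq\dots\leq x_{K*}$, rewriting the constraints on the source side, subtracting consecutive constraints to decouple slab $k$, backward recursion in which the top region reduces to Proposition \ref{prop: relu_single_constraint}, two-wall complementary slackness yielding the three-piece shift, and enumeration of candidate systems selected by lowest cost. The only cosmetic difference is that you obtain the constant interior shift via the partial-sum multiplier $\lam_{j*}=\sum_{i\leq j}\mu_i$ of a single global Lagrangian, whereas the paper works with a per-slab equality multiplier $\lam_k$ together with inequality multipliers $u_k,v_k$ for $0\leq\iota_k\leq\Delta\om_k$; these are equivalent parametrizations of the same KKT system.
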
 
  
  \begin{proof}    
  Similar to the proof of proposition 2,
we denote $ y = x + \tau(x)$. As an extension of claim 1 in the previous proof, we could find $x_{1*}\leq x_{2*}\leq \dots \leq x_{K*}$ such that 
\begin{equation}
    \begin{cases}
      x+\tau(x)\leq \om_k & x<x_{k*}, \\
      x+\tau(x)\geq \om_k & x\geq x_{k*}, k =1,2,\dots, K. 
      \label{ineq: turning_points_K}
        \end{cases}
\end{equation}
Inspired by the $K$ inequalities \eqref{ineq: turning_points_K}, we can decompose the real line into $K$ intervals: $(\infty,x_{1*}],(x_{1*},x_{2*}],\dots,(x_{K*},\infty)$, where $x_{1*}\leq x_{2*}\leq,\dots$ are to be computed later. We can rewrite the $K$ equality constraints by decompose the integrals on the LHS of the constraints into the ones on intervals:

\begin{align}
     \underset{\tau}{\min}& \int_{-\infty}^{x_1}{|\tau(x)|^2f_\rho(x)dx} + \sum_{k=1}^{K-1}{\int_{x_{k*}}^{x_{k+1*}}{|\tau(x)|^2 d\rho}}+\int_{x_{K*}}^{\infty}{|\tau(x)|^2 d\rho} \\
     \text{s.t.}&\;\int_{x_{K*}}^{\infty}{\tau(x)d\rho} = \bar{f}_K + (1-c_{K*})\om_K - s_{K*}, \\
     & \int_{x_{k*}}^{x_{k+1*}}{\tau(x)d\rho} = \bar{f}_k - \bar{f}_{k+1} + (1-c_{k*})\om_k- (1-c_{k+1*})\om_{k+1*}  \\ 
     &\qquad\qquad\qquad\;\; - (s_{k*}-s_{k+1*}),\;\; k\in [K-1],
\label{opt:K_constraints_separation}
\end{align}
where we denote 
\begin{equation}
    c_{k*} = \int_{-\infty}^{x_{k*}}{p_{\rho}(x)dx},\;\;
    s_{k*}  = \int_{x_{k*}}^{\infty}{xp_{\rho}(x)dx}.
\end{equation}

We can solve the general case in a reverse dynamic programming way. That is, we begin with $k=K$, solving $\tau_K$ and $x_{K*}$, and then go backwards until $k=1$.

\paragraph{Step 1:} When $k=K$, we aim to solve the following problem:
\begin{equation}
    \begin{aligned}
     \underset{\substack{ \iota_K(x),x_{K*} \\ \iota_K(x)\geq 0}}{\min}& \int_{x_{K*}}^{\infty}{|\iota_K(x) + \om_K - x|^2 d\rho},   \\
     \text{s.t.}&\;\int_{x_{K*}}^{\infty}{\iota_K(x)d\rho} = \bar{f}_K.
     \label{opt:multi_constraint_K_last}
    \end{aligned}
\end{equation}
The problem \eqref{opt:multi_constraint_K_last} can be solved by the same way as discussed in the previous paragraph to find $\iota_K(x), x_{K*}$. Denote $\tau_{K*}(x) = \iota_{K*}(x) + \om_K - x$. We have 

\begin{equation}
    p_{\mu}(y) = \begin{cases}
        p_{\rho}(y-\tau_K(x_{K*})), & y\geq x_{K*}+\tau_K(x_{K*}), 
        \label{sol:k=K}
    \end{cases}
\end{equation}

\textbf{Step 2:} When $k=1,2,\dots,K-1$, we aim to find the optimal $\tau^*$ when $x\in [x_{k*},x_{k+1*}]$. 

Having solved the case $k=K$, we substitute ${x_{K*}}$ into \eqref{opt:multi_constraint_K_k_k1} and obtain the optimal solution $\tau_{K-1*}$ parameterized by $x_{K-1*},\lambda_{K-1}$, which could then be used in \eqref{opt:multi_constraint_K_k_k1} to solve $\tau_{K-2*}$, etc.  Let $\iota_k(x) = \tau(x) - (\om_k - x)$ for $k=1,2,\dots, K$ represent the shifting $\iota$ restricted within the interval $[x_{k*},x_{k+1*}]$. 
 From the monotone mp assumption we know $\iota_k(x)\geq 0$. Unlike the case of a single constraint, we also know that $\tau(x) - (\om_{k+1}-x)\leq 0$, implying $\iota_k(x)\leq \om_{k+1}-\om_k$ within the interval $[x_{k*},x_{k+1*}]$. 
 
 Inspired by the lemma, we know the optimal shift function $\tau^*$ is a constant in the region $[x_{k*},x_{k+1*}]$, can be denoted as $\tau^*_k$. Then the optimization problem is converted into 
\begin{equation}
\begin{aligned}
     \underset{\substack{ \iota_k(x),x_{k*} \\ 0\leq\iota_k(x) \\
     \iota_k(x) \leq \om_{k+1}-\om_k}}{\min}& \int_{x_{k*}}^{x_{k+1*}}{|\iota_k(x) + \om_k - x|^2 d\rho}, \\
     \text{s.t.}&\;\int_{x_{k*}}^{x_{k+1*}}{\iota_k(x)d\rho} = \bar{f}_k-\bar{f}_{k+1} - \int_{x_{k+1*}}^{\infty}{(\om_{k+1}-\om_k)p_\rho(x)dx} 
    ,\;\\
    &k=1,2,\dots, K-1 .
\label{opt:multi_constraint_K_k_k1}
\end{aligned}
\end{equation}
We introduce another Lagrange multiplier $v_k\geq 0$ to pair with the extra inequality constraint $\iota_k(x)\leq \om_{k+1}-\om_k =:\Delta\om_k$. Also for brevity we denote 
\begin{equation}
\Delta\tilde{f}_k = \bar{f}_k-\bar{f}_{k+1} - \int_{x_{k+1*}}^{\infty}{(\om_{k+1}-\om_k)p_\rho(x)dx} 
\end{equation}

We denote $u_k(x)$ as the Lagrange multiplier for the inequality constraint $\iota_k(x)\geq 0$. The KKT condition as follows: 

\begin{align}
   \iota_k(x) + \om_k - x - u_k(x) + v_k(x) - \lam_k&\equiv 0,
\label{eq:first_order_condition_k} \\
u_k(x)\iota_k(x) &= 0,\\
v_k(x)(\iota_k(x)-\Delta\om_k) &= 0.
\end{align}
1)If $\iota_k(x)< \Delta \om_k$ strictly, the last condition is active and thus $v_k(x)=0$. In this situation the previous discussions on the case $K=1$ an carry over here:

1a) If $\iota_k(x)>0$ strictly, $u_k(x)=0$ and we get $\iota_k(x) = x-\om_k + \lam_k$. The strict positivity of $\iota_k$ implies this could happen when $x>\om_k - \lam_k$. Considering $\iota_k<\Delta\om_k$ strictly, we revise the region into $\om_k-\lam_k<x<\om_{k+1}-\lam_k$.

1b) If $\iota_k(x)=0$, we know $u_k(x)>0$ strictly. Together we have known $v_k(x)=0$. Substituting choices of $\iota_k,u_k,v_k$ into the first-order equation, we know $x = \om_{k}-\lam_k-u_k(x)$, implying this happens when $x<\om_k-\lam_k$.

2)If $\iota_k(x)=\Delta\om_k$ for some $x\in [x_{k*},x_{k+1*}]$, we know $v_k(x)>0$ stricly and $u_k(x)=0$ (since $\iota_k(x)>0$ strictly as well). As a result, we conclude $v_k(x) = \lam_k + x -\om_{k+1}>0$ and thus $x>\om_{k+1}-\lam_k$. In this situation $\tau(x) = \om_{k+1}-x$.

To summarize, we can derive $\iota_k(x)$ as follows:
\begin{equation}
  \iota_k(x) = \begin{cases}
    0, & x<\om_k -\lam_k, \\
      x - \om_k + \lam_k, & \om_k -\lam_k<x<\om_{k+1}-\lam_k, \\
    \Delta\om_k, & x>\om_{k+1}-\lam_k. 
  \end{cases} 
  \label{sol:optimal_iota_k}
\end{equation}

The previous expression \eqref{sol:optimal_iota_k} specifies the optimal shift $\tau^*_k(x)$ within $[x_{k*},x_{k+1*}]$ (or equivalently, $\iota_k(x)$) when $x$ belongs to different sub-regions. However, the relationship among $x_{k*},x_{k+1*},\om_{k}-\lam_k,\om_{k+1}-\lam_k$ remains unknown (except we assume $x_{k*}\leq x_{k+1*}$ and we know $\om_k\leq \om_{k+1}$). There are in total six cases below: For different cases, we could rewrite the objective function $J_k = J|_[x_{k*},x_{k+1*}]$ into at most three terms.

0) If $x_{k*} \leq x_{k+1*}\leq \om_k - \lam_{k*}{\leq \om_{k+1}-\lam_k }$, objective function only contains one term:
\begin{equation}
J_k(x_{k*}) = \int_{x_{k*}}^{x_{k+1*}}{|\om_k-x|^2d\rho}
\end{equation}

The equality constraint reduces to 
\begin{equation}
\int_{x_{k}}^{x_{k+1*}}{0 d\rho}
    = \bar{f}_k-\bar{f}_{k+1} - \int_{x_{k+1*}}^{\infty}{(\om_{k+1}-\om_k)p_\rho(x)dx}.
\end{equation}
Which is impossible unless the RHS is zero.

1) If $x_{k*}<\om_{k} - \lam_k {<x_{k+1*}}< \om_{k+1} - \lam_k$,  the objective function writes as follows:
\begin{equation}
      J_k(x_{k*}) = \int_{x_{k*}}^{\om_{k}-\lam_k}{|\lam_k|^2p_\rho(x)dx}  +\int_{\om_{k}-\lam_k}^{x_{k+1*}}{|\om_{k+1}-x|^2 p_\rho(x)dx}.
\end{equation}
with the equality constraint:
\begin{equation}
\begin{aligned}
H_k(\lambda_k):&= \int_{\om_k-\lam_{k}}^{x_{k+1}}{(x-\om_k+\lambda_k)d\rho}= \Delta \tilde{f}_k.
\end{aligned}
\label{eq:lambda_K_case_two}
\end{equation}
The optimal $\lambda_k$ can be specified via solving \eqref{eq:lambda_K_case_two}. The optimal $x_k = \om_k-\lam_k$.

2) If $x_{k*}\leq \om_k - \lam_{k*}{\leq \om_{k+1}-\lam_k \leq x_{k+1*} }$ :
\begin{equation}
    J_k(x_{k*}) = \int_{x_{k*}}^{\om_k-\lam_{k*}}{|\om_k-x|^2d\rho} + \int_{\om_k-\lam_{k*}}^{\om_{k+1} - \lam_{k*}}{|\lam_{k*}|^2d\rho}  +\int_{\om_{k+1}-\lam_{k*}}^{x_{k+1*}}{|\om_{k+1}-x|^2 d\rho}
\end{equation}
where we have according to \eqref{sol:optimal_iota_k} and recall $\tau_k(x) = \iota_k(x) + \om_k - x$:
\begin{equation}
    \tau^*_k(x) = 
    \begin{cases}
        \om_{k+1} - x,  & x\in [\om_{k+1} - \lam_{k*},x_{k+1*}],\\
        \lambda_{k*}, &  x \in [\om_k-\lam_{k*} ,\om_{k+1}-\lam_{k*}],\\
        \om_k-x & x\in [x_{k*},\om_k-\lam_{k*}],
    \end{cases}  \label{eq:tau_final_solution_case1}
\end{equation}
and $\lam_{k*}$ can be computed via the following equation 
\begin{equation}
\begin{aligned}
H_k(\lambda_k):&= \int_{\om_k-\lam_{k}}^{{\om_{k+1}-\lam_k}}{(x-\om_k+\lambda_k)d\rho}{+\int_{\om_{k+1}-\lam_k}^{x_{k+1*}}{\Delta\om_k d\rho}} \\
    &= \Delta \tilde{f}_k.
\end{aligned}
    \label{eq:lambda_K_case_one}
\end{equation}
We notice $\frac{\partial H_k}{\partial \lam_k} = \int_{\om_k-\lam_k}^{\om_{k+1}-\lam_k}{p_\rho(x)dx}$.
Similar to the single-constraint case, the constraint \eqref{eq:lambda_K_case_one} only concerns $\lam_k$ and is independent of $x_k$. Thus $\lam_k$ is computed via \eqref{eq:lambda_K_case_one}. So $x_{k*} = \om_k-\lam_{k*}$. After computing the optimal shift $\tau_k$ for all $k\in [K]$, we can now write out the optimal target distribution $p_{\mu}$ as follows:
\begin{equation}
    p_\mu(y) = \begin{cases}
         p_\rho(y-\lam_{k*}) & y\in [x_{k*}+\lam_{k*},\om_{k+1}],\\
         c_k\delta_{\om_{k+1}}(y)& y\in[\om_{k+1},x_{k+1*}+\lam_{k*}],
         \label{sol:case1_f_mu}
    \end{cases}
\end{equation}
with $c_k = \int_{\om_{k+1}-\lam_k}^{x_{k+1*}}{p_\rho(x)dx}$.

3) If $\om_{k} - \lam_k <x_{k*}< \om_{k+1} - \lam_k{<x_{k+1*}}$, the objective function contains two terms: 
\begin{equation}
    J_k(x_{k*}) = \int_{x_{k*}}^{\om_{k+1}-\lam_k}{|\lam_k|^2p_\rho(x)dx}  +\int_{\om_{k+1}-\lam_k}^{x_{k+1*}}{|\om_{k+1}-x|^2 p_\rho(x)dx}.
\end{equation}
The complementary slackness conditions still apply. 
Similarly $\lambda_k$ is dependent on $x_{k*}$ via the following equality constraint:
\begin{equation}
\begin{aligned}  H_k(x_{k*},\lambda_k):&= \int_{x_{k*}}^{{\om_{k+1}-\lam_k}}{(x-\om_k+\lambda_k)d\rho} +\int_{{\om_{k+1}-\lam_k}}^{{x_{k+1*}}}{\Delta \om_k d\rho}  \\
     &= \bar{f}_k-\bar{f}_{k+1} - \int_{x_{k+1*}}^{\infty}{(\om_{k+1}-\om_k)p_\rho(x)dx}.
\end{aligned}
    \label{eq:lambda_K_case}
\end{equation}
We can apply implicit function theorem to compute its derivative as:

\begin{equation}
    \lambda'_k(x_k) = -\frac{\partial H_k/\partial x_k}{\partial H_k/\partial \lambda_k} = -\frac{-(x_k-\om_k + \lam_k)p_\rho(x_k)}{\int_{x_k}^{\om_{k+1}-\lam_k}{p_\rho(x)dx}} = \frac{(x_k-\om_k + \lam_k)p_\rho(x_k)}{\int_{x_k}^{\om_{k+1}-\lam_k}{p_\rho(x)dx}}
\end{equation}
For simplicity, we denote $I_k = \int_{x_k}^{\om_{k+1}-\lam_k}{p_\rho(x)dx}$.
\color{black}

With a little abuse of notation, we here use $\lam_k(x_*)$ to stress such dependency. 
Accordingly, 
\begin{equation}
    \tau^*_k(x) =
        \begin{cases}    \lambda_k(x_{k*})\qquad &x\in [x_{k*},\om_{k+1}-\lam_k], \\ 
        \om_{k+1}-x,& x\in[\om_{k+1}-\lam_k,x_{k+1*}].
    \end{cases}   \label{eq:tau_final_solution_case2}
 \end{equation} 
To find out the optimal $x_{k*}$, we apply first order condition:
\begin{equation}
\begin{aligned}
     0 &= \frac{dJ(x_k)}{dx_k}\\
     &= 2\lam_k(x_k)\lam'_k(x_k)\Big[\int_{x_k}^{\om_{k+1}-\lam_{k}(x_k)}{p_\rho(x)dx}-\frac{1}{2}\lam_k(x_k)p_\rho(\om_{k+1}-\lam_k(x_k))\Big]-\lam^2_k(x_k)p_\rho(x_k)  \\
     & +|\lambda_k(x_k)|^2\lambda'(x_k)p_\rho(\om_{k+1}-\lam_k(x_k)) \\
     & = \lam_k(x_k)\Big[2\lam'_k(x_k)\int_{x_k}^{\om_{k+1}-\lam_{k}(x_k)}{p_\rho(x)dx}-\lam_k(x_k)p_\rho(x_k)\Big] \\
     & = \lam_k(x_k)p_\rho(x_k)\Big( \frac{2(x_k-\om_k+
   \lam_k)}{I_k} \int_{x_k}^{\om_{k+1}-\lam_{k}(x_k)}{p_\rho(x)dx}-\lam_k(x_k)\Big), \\
\end{aligned}
\end{equation}

 Denote
 \begin{equation}
 \begin{aligned}
   G_k(x_k) & =2\frac{x_k-\om_k+\lam_k}{I_k}\Big(\int_{x_k}^{\om_{k+1}-\lam_{k}(x_k)}{p_\rho(x)dx}\Big)-
\lam_k(x_k) \\
&=2(x_k-\om_k) +\lam_k.
 \end{aligned}   
 \end{equation}
So the zeros of $G_k$ are $x_k = \om_k-\frac{1}{2}\lam_k$. 

  where 
  \begin{equation}
      \begin{aligned}
          A_k &= x_k - \om_k +\lambda_k(x_k), \\
          D_k&=\int_{x_k}^{x_{k+1*}}{p_\rho(x)dx}, \\
          I_k&=\int_{x_k}^{\om_{k+1}-\lam_k(x_k)}{p_\rho(x)dx}.
      \end{aligned}
  \end{equation}
The first-order condition leads to $G_k(x_k) = 0$ or $\lambda_k(x_k)=0$.
 
The solutions \eqref{eq:tau_final_solution_case1} and \eqref{eq:tau_final_solution_case2} characterize the solutions for general $k$ when $k=1,2,\dots, K-1$.

In this case, we have
\begin{equation}
    p_\mu(y) =
    \begin{cases}
         p_\rho(y-\lam_{k*}),&\;y\in[\lam_{k*}+x_{k*},\om_{k+1}] \\
         c_k\delta_{k+1}(y)&\;y\in[\om_{k+1},\lam_k+x_{k+1}].
    \end{cases}  
\label{sol:k<K_2}
\end{equation}
Notice here the intervial is included in $[\om_k,\om_{k+1}]$.

4) If $\om_{k} - \lam_k <x_{k*}< {x_{k+1*}<}\om_{k+1} - \lam_k$, the objective function contains two terms: 
\begin{equation}
    J_k(x_{k},\lam_k) = \int_{x_{k}}^{x_{k+1*}}{|\lam_k|^2p_\rho(x)dx}.
\end{equation}
And the equality constraint is the same as in 
\begin{equation}
\begin{aligned}
     H_k(x_{k*},\lambda_k):&= \int_{x_{k*}}^{x_{k+1*}}{(x-\om_k+\lambda_k)p_\rho(x)dx} = \Delta\tilde{f}_k.
\end{aligned}
    \label{eq:lambda_K_case_three}
\end{equation}
Then we apply first-order condition on $J$ and get
\begin{equation}
    \begin{aligned}
        0\equiv& \frac{\partial J}{\partial x_k} = 2\lam_k\lam'_k\int_{x_{k}}^{x_{k+1*}}{p_\rho(x)dx} - \lam^2_kp_\rho(x_k).
    \end{aligned}
\end{equation}
From the previous expression \eqref{eq:lambda_K_case_two} we apply implicit function theorem to get 
\begin{equation}
    \lambda'_k(x_k) = \frac{(x_k -\om_k+\lam_k)p_\rho(x_k)}{D_k}
\end{equation}
We finally get 
\begin{equation}
    \lam_k = -2(x_k-\om_k).
\end{equation}

6) Finally if $x_{k*}\geq \om_{k+1}-\lam_k$,  or equivalently  $x_{k+1*}\geq x_{k*}\geq \om_{k+1}-\lam_k> \om_{k}-\lam_k$ 
the objective contains one term:
\begin{equation}
    J_k(x_{k*}) = \int_{x_{k*}}^{x_{k+1*}}{|\om_{k+1}-x|^2 d\rho}.
\end{equation}
with $x_{k*}$ meeting the following constraint 
\begin{equation}
    \int_{x_{k*}}^{x_{k+1*}}{\Delta \om_k p_\rho(x)dx} = \bar{f}_k-\bar{f}_{k+1} - \int_{x_{k+1*}}^{\infty}{(\om_{k+1}-\om_k)p_\rho(x)dx}.
\end{equation}
or equivalently,
\begin{equation}
\Delta\om_k\int_{x_{k*}}^{\infty}{d\rho} = \bar{f}_k-\bar{f}_{k+1}
\label{eq:constraint_K_both_large}
\end{equation}
When $\Delta\om_k\geq \bar{f}_k-\bar{f}_{k+1}$, the equation \eqref{eq:constraint_K_both_large} specifies a unique $x_{k*}$, with $\lambda_k$ chosen to anything no bigger than $\om_{k+1}-x_k$ and $\tau(x) = \om_{k+1}-x$. We have 
\begin{equation}
    p_\mu(y) = c\delta_{\om_{k+1}}(y),\;y\in [\om_{k+1},\om_{k+1}+x_{k+1}-x_k]
\end{equation}
with $c= \int_{x_k}^{x_{k+1}}{p_\rho(x)dx}$.

\paragraph{Step 3:} For $x\leq x_{1*}$ the optimal shift 
function
\begin{equation}
    \tau^*(x) = 0,
\label{sol:relu_no_shift_multiple_constraint}
\end{equation}
as is similar from proposition. Thus the target distribution coincides with the source distribution in the very region:
\begin{equation}
    p_\mu(y) = p_\rho(y),\;y\in (-\infty,x_{1*}).
    \label{sol:k<1}
\end{equation}

Summarizing the expressions \eqref{sol:k<1},\eqref{sol:k<K_2}, \eqref{sol:case1_f_mu} and \eqref{sol:k=K} produces the resulting target density, with the addition of densities in overlapping regions and $0$ in noncovering regions.
\end{proof}

\end{document}